\crefname{section}{Sec.}{Sec.}
\crefname{thm}{Thm.}{Theorem}
\crefname{appendix}{App.}{Appendices}
\crefname{algorithm}{Alg.}{Algorithms}
\crefname{equation}{Eq.}{Eqs.}
\crefname{figure}{Fig.}{Figs.}
\crefname{prop}{Prop.}{Props.}
\newcommand{\revision}[1]{{#1}}
\definecolor{TolMutedBlue}{HTML}{332288}
\definecolor{TolMutedGreen}{HTML}{117733}
\definecolor{TolMutedPurple}{HTML}{AA4499}
\newtcolorbox{mybox}[2][]{
  enhanced,
  attach boxed title to top center={yshift=-8pt,xshift=0pt},
  colframe=black,
  colback=white,
  colbacktitle=white,
  coltitle=black,
  left=2pt,
  right=2pt,
  top=12pt,
  boxed title style={
    boxrule=0pt,
    colframe=white,
    },
  title=#2,
  sharp corners, 
  boxrule=0.5pt, 
  #1}
\newcommand{\highlight}[1]{{#1}}
\newcommand{\hlambda}{{\boldsymbol{\eta}}}
\newcommand{\blambda}{{\boldsymbol{\lambda}_g}}
\newcommand{\bmu}{{\boldsymbol{\mu}_g}}
\newcommand{\qglobal}{{q}_g}
\newcommand{\vglobal}{{\boldsymbol{\theta}}_g}
\newcommand{\vlocal}{\boldsymbol{\theta}}
\newcommand{\ttsum}{\text{\tt{Sum}}}
\newcommand{\ttmean}{\text{\tt{Mean}}}
\newcommand{\setDist}{\mathcal{Q}}
\newcommand{\Loss}{\mathcal{L}}
\newcommand{\natgrad}{\widetilde{\nabla}}
\newcommand{\grad}{\nabla}
\newcommand{\loss}{\ell}
\newcommand{\vparam}{\boldsymbol{\theta}}
\newcommand{\param}{\theta}
\newcommand{\vnatparam}{\boldsymbol{\lambda}}
\newcommand{\vmeanparam}{\vmu}
\newcommand\cut[1]{}
\newcommand{\squishlist}{
   \begin{list}{$\bullet$}
    { \setlength{\itemsep}{0pt}      \setlength{\parsep}{3pt}
      \setlength{\topsep}{3pt}       \setlength{\partopsep}{0pt}
      \setlength{\leftmargin}{1.5em} \setlength{\labelwidth}{1em}
      \setlength{\labelsep}{0.5em} } }
\newcommand{\squishlisttwo}{
   \begin{list}{$\bullet$}
    { \setlength{\itemsep}{0pt}    \setlength{\parsep}{0pt}
      \setlength{\topsep}{0pt}     \setlength{\partopsep}{0pt}
      \setlength{\leftmargin}{2em} \setlength{\labelwidth}{1.5em}
      \setlength{\labelsep}{0.5em} } }
\newcommand{\squishend}{
    \end{list}  }
\newcommand{\half}{\mbox{$\frac{1}{2}$}}
\newcommand{\halfs}[1]{\mbox{$\frac{#1}{2}$}}
\newcommand{\rnd}[1]{\left(#1\right)}
\newcommand{\sqr}[1]{\left[#1\right]}
\newcommand{\myang}[1]{\langle#1\rangle}
\newcommand{\myexpect}{\mathbb{E}}
\newcommand{\gauss}{\mbox{${\cal N}$}}
\newcommand{\myvec}[1]{\mbox{$\mathbf{#1}$}}
\newcommand{\myvecsym}[1]{\mbox{$\boldsymbol{#1}$}}
\newcommand{\vdelta}{\mbox{$\myvecsym{\delta}$}}
\newcommand{\veta}{\mbox{$\myvecsym{\eta}$}}
\newcommand{\vmu}{\mbox{$\myvecsym{\mu}$}}
\newcommand{\vlambda}{\boldsymbol{\lambda}}
\newcommand{\vsigma}{\mbox{$\myvecsym{\sigma}$}}
\newcommand{\vb}{\mbox{$\myvec{b}$}}
\newcommand{\vc}{\mbox{$\myvec{c}$}}
\newcommand{\vg}{\mbox{$\myvec{g}$}}
\newcommand{\vh}{\mbox{$\myvec{h}$}}
\newcommand{\vm}{\mbox{$\myvec{m}$}}
\newcommand{\vs}{\mbox{$\myvec{s}$}}
\newcommand{\vu}{\mbox{$\myvec{u}$}}
\newcommand{\vv}{\mbox{$\myvec{v}$}}
\newcommand{\vy}{\mbox{$\myvec{y}$}}
\newcommand{\vA}{\mbox{$\myvec{A}$}}
\newcommand{\vF}{\mbox{$\myvec{F}$}}
\newcommand{\vH}{\mbox{$\myvec{H}$}}
\newcommand{\vI}{\mbox{$\myvec{I}$}}
\newcommand{\vS}{\mbox{$\myvec{S}$}}
\newcommand{\vT}{\mbox{$\myvec{T}$}}
\newcommand{\vV}{\mbox{$\myvec{V}$}}
\newcommand{\vX}{\mbox{$\myvec{X}$}}
\newcommand{\diag}{\mbox{$\mbox{diag}$}}
\newcommand{\be}{\begin{equation}}
\newcommand{\ee}{\end{equation}}
\newcommand{\bea}{\begin{eqnarray}}
\newcommand{\eea}{\end{eqnarray}}
\newcommand{\beaa}{\begin{eqnarray*}}
\newcommand{\eeaa}{\end{eqnarray*}}
\DeclareMathOperator{\argmin}{argmin}
\DeclareMathOperator{\argmax}{argmax}
\DeclarePairedDelimiterX{\infdivx}[2]{{}}{{}}{%
  \left( #1\,\delimsize\|\,#2\right)%
}
\DeclareMathOperator{\KLop}{KL}
\newcommand{\myKL}{{\KLop}\infdivx}
\title{Federated ADMM from Bayesian Duality}
\author{%
\begin{minipage}[t]{0.5\textwidth}
  Thomas M\"ollenhoff\thanks{Equal contribution, $^\dagger$Corresponding author. Code is available {\href{https://github.com/team-approx-bayes/bayes-admm}{here}.}}\\
  \normalfont
  RIKEN Center for AI Project\\
  Tokyo, Japan\\
  \texttt{thomas.moellenhoff@riken.jp}
\end{minipage}
  \hfill 
\begin{minipage}[t]{0.5\textwidth}
  Siddharth Swaroop$^*$\\
  \normalfont 
  University College London\\
  London, United Kingdom\\
  \texttt{s.swaroop@ucl.ac.uk}
\end{minipage}
  \AND
\begin{minipage}[t]{0.5\textwidth}
  Finale Doshi-Velez\\
  \normalfont 
  Harvard University\\
  Cambridge, United States\\
  \texttt{finale@seas.harvard.edu} 
\end{minipage}
  \hfill
\begin{minipage}[t]{0.5\textwidth}
  Mohammad Emtiyaz Khan$^\dagger$\\
  \normalfont
  RIKEN Center for AI Project\\
  Tokyo, Japan\\
  \texttt{emtiyaz.khan@riken.jp}
\end{minipage}
}
\begin{document}

\maketitle

\begin{abstract}
   We propose a new Bayesian approach to generalize the federated Alternating Direction Method of Multipliers (ADMM). We show that the solutions of variational-Bayesian (VB) objectives are associated with a duality structure that not only resembles the structure of ADMM's fixed-points but also generalizes it. For example, ADMM-like updates are recovered when the VB objective is optimized over the isotropic-Gaussian family, and new non-trivial extensions are obtained for other exponential-family
   distributions. These extensions include a Newton-like variant that converges in one step on quadratic objectives and an Adam-like variant that yields up to~7\% accuracy boosts for deep heterogeneous cases. Our work opens a new Bayesian way to generalize ADMM and other primal-dual methods. 
\end{abstract}

\section{Introduction}
\label{sec:intro}
\highlight{The} Alternating Direction Method of Multipliers (ADMM) forms the backbone of many federated learning algorithms \citep{acar2021feddyn,zhang2021fedpd,gong2022fedadmm,wang2022fedadmm,mishchenko2022proxskip,zhou2023fedadmm}.
The goal of federated learning is to train a \emph{global} model {at a server} without {ever accessing} the \emph{local} data stored {at} the clients.
{In ADMM,} this is achieved through communication between the server and clients, as shown in \cref{fig:simple} (left).
The server first broadcasts the global parameter to the clients, and the clients use it to update their local parameters. The clients then send the updated parameters and their local gradients back to the global server, which combines them to update the global parameter. By repeating these steps, ADMM can recover the solution obtained by jointly training on all data, provably so in many cases.

ADMM was proposed back in the 1970s~\citep{GlMa75,GaMe76} and its roots can be traced back to an early work by \citet{douglas1956numerical}. Yet, it continues to be used more or less in the same form it was originally proposed. The robustness of ADMM's algorithmic structure is intriguing, and {we wonder whether} there is a more general formulation {of this structure}. Such formulations {could be especially relevant} to tackle new issues that arise in federated deep learning.
{Our search for new generalizations follows} a recent result {by} \citet{SwKh25} {who} connect a variational-Bayesian (VB) version of federated learning to ADMM. 
{Their} work shows a close resemblance between ADMM and VB, but it falls short of deriving ADMM {as a special case of VB.}
Our goal here is to {fill} this gap {by proposing a new Bayesian way to generalize ADMM}.

We propose a general VB framework that can be used to derive and extend federated ADMM.  
Our main result is to show that the solutions of the VB objective are associated with a duality structure that not only resembles ADMM's \highlight{fixed-point equations,} but also naturally {generalizes} them. 
We call this structure Bayesian-duality and use it to propose a new algorithm called Bayesian-ADMM. {Our generalization is obtained by making two changes to ADMM. First, we introduce distributions over parameters and, second, we replace gradients by natural gradients (\Cref{fig:simple}}).
{The use of natural gradients is crucial to fix the issue of \citet{SwKh25} and enables us to derive classical ADMM as a special case of Bayesian-ADMM that employs isotropic Gaussian posteriors.} New non-trivial generalizations are {also} automatically obtained by using
other exponential-family distributions.

We derive two new extensions of ADMM using Bayesian-ADMM. The first extension is a Newton-like variant obtained by using multivariate Gaussian distributions. Unlike classical ADMM, this new variant converges in one communication round when applied to quadratic objectives. 
{The second extension is an Adam-like variant obtained by restricting the covariance to be a diagonal matrix. {This variant can be efficiently implemented by using the IVON method} of \citet{shen2024variational} and yields up to 7$\%$ accuracy boost for deep heterogeneous cases without increasing the cost and overall runtime (\Cref{fig:convergence_teaser}). 
Ultimately, our work opens a new Bayesian way to generalize ADMM and other primal-dual methods. This was not possible before, even though a lot of work has been done recently on Bayesian methods for federated deep learning~\citep{alshedivat2021federated,fedpop,guo2023federated,yurochkin2019bayesian,louizos2021expectation,pal2024simple}.
}

\begin{figure}[t!]
  \centering
  \includegraphics[width=5.49in]{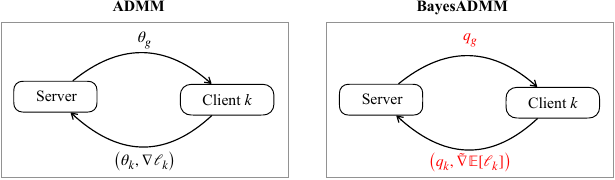}
  \caption{In ADMM, the server broadcasts the global parameter $\vglobal$ to the clients who then update their local $\vparam_k$ and send them back to the server along with the gradients $\nabla \loss_k$ of their loss functions. Bayesian-ADMM {generalizes} ADMM by {using} distributions over global and local parameters (denoted by $\qglobal$ and $q_k$, respectively) and {replacing} gradients {by} natural gradients (denoted by \smash{$\natgrad
  \myexpect[\loss_k]$}).}
  \label{fig:simple}
\end{figure}

\begin{figure}[t!]
  \centering
  \includegraphics[width=5.49in]{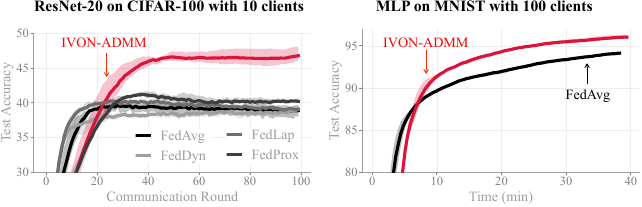}
  \caption{
     We derive an Adam-like extension of ADMM, called IVON-ADMM, which gives up to 7$\%$ accuracy boost over existing federated deep-learning methods (left) without any increase in the cost and overall runtime (right). Details of the experiment are in \cref{sec:experiments}.
  }
  \label{fig:convergence_teaser}
\end{figure}

\section{Federated Learning via ADMM and Variational Bayes}
\label{sec:background}

Federated learning aims to train a global model with parameter $\vglobal$ at a central server by communicating with $K$ clients and without ever gaining access to their local data. The clients can access their data through local loss functions {denoted by $\loss_k(\vparam)$ for the $k$'th client. The goal is to solve for}
\begin{equation}
    \vglobal^* = \arg \min_{\vparam} \,\, \sum_{k=0}^K \loss_k(\vparam), 
    \label{eq:global_problem}
 \end{equation}
{where $\loss_0(\vparam)$ denotes the regularizer. However, since} the server is not allowed direct access to $\loss_k$, it is forced to solve this problem by communicating with clients.
{The main idea is to perform distributed optimization by exploiting the structure of the solution.} For instance, for $\loss_0(\vglobal) = \half\|\vglobal\|^2$, {the optimality condition of the problem can be written as a sum over local gradients (proof in \cref{app:addmmopt}),}
\begin{align}
   \vglobal^* &= -\sum_{k=1}^K \nabla \loss_k(\vglobal^*).
   \label{eq:admmopt1} 
\end{align}
{This structure suggests distributing the computation of local gradient $\nabla \loss_k$ across the clients, and gathering those results to estimate $\vglobal^*$.}
{ADMM provides a framework to perform such distributed optimization. It introduces local parameters $\vlocal_k$ and aims to solve an equivalent constrained optimization problem:} 
\begin{equation}
   \min_{\vglobal, \vlocal_{1:K}} ~ \sum_{k=1}^K \ell_k(\vlocal_k) + \loss_0(\vglobal), \text{ such that } \vglobal = \vlocal_k \text{ for all } k=1,2,\ldots, K.
    \label{eq:contrained_problem}
\end{equation}
{The problem can be solved by formulating a Lagrangian with multipliers, denoted by $\vv_k$ for the $k$'th constraint. The stationarity condition of the Lagrangian, shown below, then provides a way to distribute the computations (proof in \cref{app:addmmopt}),}
\begin{align}
    \vlocal_k^* &= \vglobal^*, \qquad
    \vv_k^* = -\nabla \loss_k(\vlocal_k^*), \qquad
    \vv_g^* = \sum_{k=1}^K \vv_k^*, \qquad
    \vglobal^* = \vv_g^*.
    \label{eq:admmopt2}
\end{align}
{The first condition says that we set $\vparam_k^* = \vparam_g^*$ to satisfy the constraint. The second condition sets the optimal multiplier $\vv_k^*$ to the negative of the local gradient. The third condition gathers all the multipliers into a global variable $\vv_g^*$, and finally the fourth condition uses it to get the global $\vglobal^*$ back.}

{The optimality condition can be drawn in the form of a \emph{dual structure}, similarly to that used by \citet[Fig.~2]{Ro67}. The parameters $\vglobal^*$ and $\vparam_k^*$ are the primal variables defined in the space of valid parameters, while $\vv_k^*$ and $\vv_g^*$ are dual variables defined in the space of valid gradients. The duality structure, shown in the left panel of \cref{fig:bayesduality}, summarizes the flow of information. First, the server broadcasts its global $\vparam_g^*$
to the clients and the clients set $\vparam_k^* = \vparam_g^*$. Then, the gradient $\grad \loss_k(\vparam_k^*)$ is evaluated and assigned to the dual $\vv_k^*$. All the dual variables are then gathered using a sum and assigned to $\vv_g^*$, which is then used to obtain $\vparam_g^*$. }

\begin{figure*}[t!]
  \centering
  \includegraphics[width=5.5in]{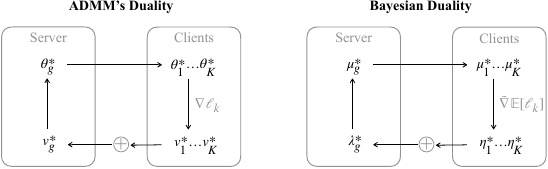}
  \caption{The left side shows the dual structure associated with ADMM's fixed-point equation (\cref{eq:admmopt2}). The right side shows our new Bayesian-Duality structure associated with VB's optimality condition (\cref{eq:bayesduality}). 
  In ADMM, \smash{$\vparam_g^*$ and $\vparam_k^*$} are the primal variables, while \smash{$\vv_g^*$} and \smash{$\vv_k^*$} are dual variables. 
  Analogously, in Bayesian Duality, \smash{$\vmu_g^*$} and $\vmu_k^*$ are primal, while \smash{$\vlambda_g^*$} and \smash{$\veta_k^*$} are dual.
     }
  \label{fig:bayesduality}
\end{figure*}

The Lagrangian also yields an algorithm to perform optimization that preserves this flow of information. Essentially, we first perform local optimizations at all clients to obtain $\vparam_k$, then we update the dual $\vv_k$, and finally these results are communicated to the server which updates $\vglobal$. A detailed derivation is in \cref{app:addmmopt} and the resulting updates are summarized below:
\begin{align}
   \text{Client updates: \quad} \vlocal_k &\gets \arg\min_{\vlocal_k} \,\, \loss_k(\vlocal_k) + \vv_k^\top \vlocal_k + \frac{\rho}{2} \|\vlocal_k-\vglobal\|^2, \label{eq:admmlocalfoo} \\
      \vv_k &\gets \vv_k + \rho(\vparam_k - \vparam_g),  \label{eq:admmlocalfoo_2}\\
      \text{Server update: \quad} \vglobal &\gets \arg\min_{\vglobal} \,\, { \loss_0(\vglobal) - \sum_{k=1}^K \vv_k^\top \vglobal + \sum_{k=1}^K { { \frac{\rho}{2} \|\vglobal - \vlocal_\revision{k}\|^2 } }  }. \label{eq:admmglobalfoo} 
\end{align}
Here, $\rho>0$ is the (inverse) step-size, added to handle the quadratic proximal terms for each update.

An important property of these updates, as shown in \cref{app:addmmopt}, is that after any client's updates, the dual vectors are simply equal to their local gradients, that is,
\begin{equation}
 \vv_k = -\nabla \loss_k(\vlocal_k).
 \label{eq:interesting_prop}
\end{equation}
As a result, $\vv_g$ slowly approaches the optimal $\vglobal^*$. This can be more clearly seen for the special case when \smash{$\loss_0(\vglobal) = \half\|\vglobal\|^2$}, where the update for \smash{$\vglobal$} simplifies nicely (see \cref{eq:admm_global_simplified} in \cref{app:addmmopt}). 

The form of the ADMM updates has remained more or less the same since its original proposal in the 1970s. For instance, variants to accelerate ADMM's convergence only introduce additional variables, but do not change the form of the algorithm; this includes methods that use overrelaxation~\citep{EcBe92} or momentum~\citep{ChPo16b}. The same is true of variants that replace quadratic proximal terms by scaled (Mahalanobis) norms~\citep{PoCh11,GoLi13,ye2020optimization} or Bregman
divergences~\citep{WaBa14,wang2014convergence,ma2025bregman}. The robustness of the ADMM's algorithmic structure is intriguing and we wonder whether there is a more general formulation to go beyond it. Such generalizations could be useful to handle new issues arising in federated deep learning due to client-heterogeneity and missing data.

Our approach here is inspired by the work of \citet{SwKh25} who show a connection between federated ADMM and a distributed variational Bayesian (VB) method called Partitioned Variational Inference (PVI)~\citep{AsBu22}. They find a line-by-line correspondence between the ADMM and PVI updates, but fall short of establishing an exact connection.
In what follows, we will fix this issue by using a duality structure associated with the solutions of the VB objective.

\section{Bayesian Duality to Generalize ADMM}
\label{sec:bayes_admm}
We will now present our new Bayesian-duality structure to generalize ADMM. We start by introducing the dual structure associated with the fixed point equation of VB, and then use it to introduce Bayesian duality and Bayesian-ADMM.

\subsection{Variational Bayes and Its Solutions}
\label{sec:vb}
We will use a VB reformulation of \cref{eq:global_problem} which `lifts' the original problem to define {a new} problem {where we optimize with respect to probability distributions $q$ in a set $\mathcal{Q}$. The goal then is to solve for} 
\begin{equation}
   \qglobal^* = 
   \underset{q\in \mathcal{Q}}\argmin~ \sum_{k=1}^K \underbrace{ \mathbb{E}_{q}[\ell_k] }_{=\Loss_k} + \myKL{q}{\pi_0}.
   \label{eq:bayesian_split_problem}
\end{equation}
The second term is the KL divergence between the distribution $q$ and prior $\pi_0\propto\exp(-\loss_0)$. When $\loss_k$ corresponds to proper likelihoods and $\mathcal{Q}$ contains all possible distributions, then $q^*_g$ is the standard posterior distribution. Restricting $\mathcal{Q}$ to a smaller set, say a set of Gaussian distributions, yields an approximation to the posterior. An important property of the VB formulation is that the solution $\vglobal^*$ of
\cref{eq:global_problem} can be recovered from $\qglobal^*$ when $\mathcal{Q}$ is set to isotropic Gaussians \cite[App. C.1]{KhRu23}. We will use this technique later to derive ADMM from a federated algorithm for the VB objective.

The first element needed to generalize ADMM is to set $q\in\setDist$ to be of exponential-family (EF) form. An EF has a \emph{log}-linear form with respect to a sufficient statistic, denoted by $\vT(\vparam)$, shown below,
\begin{align}
   q(\vparam) &= h(\vparam)\exp \rnd{ \langle \vlambda, \text{\vT}(\vparam) \rangle - A(\vlambda) }.  ~
   \label{eq:qexp}
\end{align}
{Here, $\vnatparam$ denotes the natural parameter, $h(\vparam)$ denotes the base measure, and $A(\vlambda)$ is the (convex) log-partition function.
For readers unfamiliar with EFs, a brief introduction is included in \Cref{app:expfam}.

The second element is to define natural gradients that scale the gradients by using the inverse Fisher-information matrix of $q$. For EF distributions, the natural gradient can be conveniently obtained by using the expectation parameter, defined as $\vmu = \nabla A(\vlambda) = \mathbb{E}_q[\vT(\vparam)]$.
We can show that gradients with respect to $\vmu$ are equal to natural gradients with respect to $\vnatparam$ \citep[Eq. 4]{KhRu23}:
\begin{equation}
   \grad_{\text{\vmeanparam}} \Loss(\vmeanparam) = \vF(\vnatparam)^{-1}\grad_{\vnatparam} \Loss(\vmu(\vlambda)) = \natgrad_{\vlambda} \Loss(\vmu(\vlambda)),
\end{equation}
where $\vF(\vnatparam)$ and \smash{$\natgrad$} denote the Fisher information matrix and natural gradient with respect to $\vnatparam$, respectively. This equation holds because the $(\vlambda, \vmu)$ pair constitutes a \emph{dual-map}~\citep{Am16} through the convex function $A(\vlambda)$ and its Fenchel conjugate $A^*(\vmu)$. Such maps are common in information geometry and convex optimization \citep{MaMa11,RaMu15}. In the rest of the paper, we will drop the subscript $\vmu$ and write the natural gradient simply as $\grad \Loss(\vmu)$.

\citet[Eq. 5]{KhRu23} used this property to express the solution $\qglobal^*$ in terms of natural gradients. Specifically, they show that the natural parameter $\vnatparam_g^*$ of $q_g^*$ is given by 
\begin{equation}
   \vnatparam_g^* = -\sum_{k=0}^K \grad \Loss_k(\vmu_g^*)
   \qquad \implies \qquad
   \qglobal^*(\vparam) = \frac{1}{\mathcal{Z}^*} \prod_{k=0}^K t_k^*(\vparam), 
  \label{eq:natgradview}
\end{equation} 
where we define the \emph{optimal} `site' functions as $t_k^*(\vparam) = \exp\rnd{ -\myang{ \nabla \Loss_k(\vmu_g^*), \vT(\vparam) } }$, and $\mathcal{Z}^*$ is the partition function. A proof is included in \cref{app:natgradview}. The two equations were originally proposed in \citet[Eq. 11]{KhLi17} and \citet[Eq. 18]{KhNi18b}, respectively. 

These equations have a strikingly similar form to \cref{eq:admmopt1}. There, \smash{$\vparam_g^*$} is equal to the sum of the local gradients, while here, $\vnatparam_g^*$ is equal to the sum of local \emph{natural} gradients and $q_g^*$ is equal to the product of the local site functions. The existence of these equations is also mentioned in \citet[Eq. 4]{SwKh25}, but they do not make use of natural gradients. We will now show that, by using these
equations, we can directly generalize ADMM's dual structure. 

\subsection{Bayesian Duality}
We will now derive a dual structure for the VB solutions by drawing an analogy to \cref{eq:admmopt2}. There, we introduce local $\vparam_k^*$ and their corresponding dual variables $\vv_k^*$ which are set to the (negative) local gradients $\nabla \loss_k(\vparam_k^*)$. The $\vv_k^*$ are then added together to obtain $\vv_g^*$ and $\vparam_g^*$. For the VB fixed-point, we follow the same process to define local variables and their corresponding dual variables.

We introduce local distributions \smash{$q_k^*$} with pair $(\vnatparam_k^*, \vmeanparam_k^*)$ and denote their corresponding dual variables by $\veta_k^*$.
Using these variables, we can express the optimality condition in \cref{eq:natgradview} as the following equivalent set of conditions:
\begin{equation}
\begin{aligned}
  \vmu_k^* &= \vmu_g^*, \qquad 
   \hlambda_k^* = - \nabla \Loss_k(\vmu_k^*), \qquad  
   \vlambda_g^* = \sum_{k=0}^K \hlambda_k^*, \qquad
   \vmu_g^* = \nabla A(\vlambda_g^*).
\end{aligned}
  \label{eq:bayesduality}
\end{equation}
Here, $\vmu_g^*$ and $\vmu_k^*$ are the primal variables in the expectation-parameter space, while $\veta_k^*$ and $\vnatparam_g^*$ are the dual variables in the space of valid natural gradients. The first three conditions are a direct extension of those in \cref{eq:admmopt2}. The last condition connects the pair \smash{$(\vnatparam_g^*, \vmeanparam_g^*)$} through the dual map of the EF. This duality of EFs provides a foundation to build the duality of VB, which we refer to as the Bayesian-duality structure. The dual structure is visualized in the right panel in \cref{fig:bayesduality}.

This duality is missed in the work of \citet{SwKh25} who connect VB to ADMM without using natural gradients. They show that the parameters of sites $t_k$ correspond to the dual variables $\vv_k$ in ADMM, but fall short of providing an exact correspondence. This can be fixed by using Bayesian duality to write a dual structure in the space of distributions and sites, as shown below:
\begin{equation}
\begin{aligned}
   q_k^* &= q_g^*, \qquad 
   t_k^*(\vparam) = \exp\rnd{ -\myang{ \natgrad \myexpect_{q_k^*}[\loss_k], \vT(\vparam) } }, \qquad  
   t_g^* = \prod_{k=0}^K t_k^*, \qquad
   q_g^* = \frac{t_g^*}{\mathcal{Z}^*}.
\end{aligned}
  \label{eq:bayesduality_dist}
\end{equation}
This shows that the sites are locally constructed using natural gradients. These are then multiplied together and normalized to get the global $q_g^*$ which is also equal to the local $q_k^*$. The normalization in the last step plays the same role as EF's dual map in \cref{eq:bayesduality}.

\subsection{Bayesian-ADMM}

We will now present an optimization algorithm that follows the flow of information suggested by the Bayesian Duality structure. The algorithm closely follows the ADMM updates given in \cref{eq:admmlocalfoo,eq:admmlocalfoo_2,eq:admmglobalfoo}, but makes two important modifications. First, the quadratic proximal terms are replaced by the KL divergence which is a more natural choice for the EFs.
Second, the update of the dual $\veta_k$ is modified to ensure a similar condition to \cref{eq:interesting_prop} holds, that is, we make sure that the duals are set to the latest local natural gradients after every local update. 

The final algorithm, which we refer to as Bayesian-ADMM, is shown below:
\begin{align}
   \text{Client updates: \quad} \vmu_k & \gets \arg\min_{\text{$\vmu_k$}} \,\,\Loss_k(\vmu_k) + \langle \hlambda_k, \vmu_k \rangle + {\rho} \myKL{q_k}{\qglobal}
   \label{eq:bayesian_admm1} \\
   \hlambda_k &\gets \hlambda_k + {\rho} (\vlambda_k - \blambda)
    \label{eq:bayesian_admm2} \\
   \text{Server update: \quad} \bmu &\gets \arg\min_{\text{$\bmu$}} \,\, \myKL{\qglobal}{\pi_0}  - \sum_{k=1}^K \langle \hlambda_k, \bmu \rangle + \sum_{k=1}^K{\rho \myKL{\qglobal}{q_k}} .
   \label{eq:bayesian_admm3} 
\end{align}
The first and third line can both be seen as modified VB problems that are run at the clients and server respectively. The modification in the second line deserves a bit more explanation. If we follow the update in \cref{eq:admmlocalfoo_2}, then the dual update should be \smash{$\hlambda_k \gets \hlambda_k + \rho (\vmu_k - \vmu_g)$}. However, this update does not ensure that $\hlambda_k$ are equal to \smash{$\nabla \Loss_k(\vmu_k)$}. Instead, if we use the difference
between the natural parameters $\vnatparam_k - \vnatparam_g$, this issue is resolved; see a proof in \cref{app:dualarenatgrads}. Intuitively, subtraction among natural parameters corresponds to division in EFs, while the same for expectation parameters may not always make sense. We further discuss this choice in \Cref{app:relationship}, along with its relationship to other methods, such as, Bregman ADMM~\citep{WaBa14} and the Bayesian learning rule~\citep{KhRu23}.

The algorithm can also be written in a distributional form by defining sites $t_k = \exp(\myang{\veta_k, \vT(\vparam)})$,
\begin{align}
   \text{Client updates: \quad} q_k & \gets \arg\min_{q_k} \,\,\myexpect_{q_k}[\loss_k + \log t_k] + {\color{red} \rho} \myKL{q_k}{\qglobal}
   \label{eq:bayesian_admm1_dist} \\ 
   t_k &\gets t_k \rnd{\frac{q_k}{q_g}}^{\color{red} \rho}
    \label{eq:bayesian_adm2_dist} \\
   \text{Server update: \quad} q_g &\gets \arg\min_{q_g} \,\, \myKL{\qglobal}{\pi_0} - \sum_{k=1}^K \myexpect_{q_g} [\log t_k] + \sum_{k=1}^K {\color{red} \rho \myKL{\qglobal}{q_k}}.
   \label{eq:bayesian_adm3_dist} 
\end{align}
This algorithm closely resembles the PVI algorithm used by \citet{SwKh25}. Our derivation here highlights an additional fact that the site parameters $\veta_k$ are equal to the local natural gradients after every local update. The main differences to PVI are highlighted in red, which include the use of step-size $\rho$ and addition of the KL term in the global update. In fact, PVI is more similar to an algorithm by \citet{Ts91} called the Alternating Minimization Algorithm (AMA). The importance of step-sizes are well-established for AMA (see \citet[Eq.~3.4d]{Ts91}), and this connection can be used to improve the convergence of PVI
as well. We show such results later in \Cref{fig:pvi}. More details regarding the differences between Bayesian-ADMM and PVI are given in \Cref{app:pvi}. 

We note that our derivation here is to follow the ADMM update closely so as to preserve the information flow suggested by the dual structure. It is also possible to derive these updates by using a Lagrangian, similarly to the ADMM case. This derivation does not yield the update of $\veta_k$ we used. Neither does it connect to the $t_k$ update used in PVI, which is inspired by similar updates used in expectation-propagation \citep{Mi01}. 
Our modification here is important to connect such updates used in variational methods to those used in ADMM. These details are further discussed in \cref{app:lagrangian}.
We also remark here that Lagrangian formulations of VB have been used for dual optimization of Gaussian latent models \citep{KhAr13,khan2014decoupled}, but they do not exploit the duality of exponential families, instead they use mean-covariance parametrizations of Gaussians.

    \begin{algorithm}[t!]
     \caption{(IVON-ADMM) Adam-like variant of ADMM implemented using IVON~\citep{shen2024variational}. Additional steps or modifications when compared to regular federated ADMM are highlighted in red. A derivation is in \Cref{app:ivonderivation} and details on IVON in \Cref{app:ivondetails}. }
      \begin{algorithmic}[1]
          \setstretch{1}
        \renewcommand{\algorithmicrequire}{\textbf{Hyperparameters:}}
          \REQUIRE $\delta > 0$, $\rho > 0$, $\gamma>0$. 
        \renewcommand{\algorithmicrequire}{\textbf{Initialize:}}
          \REQUIRE $\vv_k \gets 0, \vu_k \gets 0, \vm_g \gets 0$, $\vs_g \gets \delta$, $\alpha \gets 1 / (1 + \rho K)$.
          \WHILE{not converged}
          \STATE Broadcast $\vm_g$ and $\vs_g$ to all clients.
          \FOR{each client $1, \hdots, K$ in parallel}
          \STATE Form the loss $\ell(\vparam) \gets \frac{1}{\rho} \bigl( \ell_k(\vparam) + \vparam^\top \vv_k - {\color{red} \half \vparam^\top \text{diag}(\vu_k) \vparam} \bigr)$
          \STATE Get $(\vm_k, \vs_k)$ by training on $\ell(\vparam)$ {\textcolor{red}{using IVON with prior $\gauss(\vparam\,|\,\vm_g,\text{diag}(\vs_g)^{-1})$}} 
          \STATE $\vv_k \gets \vv_k + \gamma \left( \vs_k \vm_k - \vs_g \vm_g   \right)$
          \STATE {\color{red}$\vu_k \gets \vu_k + \gamma \left( \vs_k - \vs_g \right)$} 
          \ENDFOR
          \STATE Gather $\vm_k$, $\vv_k$ and \textcolor{black}{$\vs_k$, $\vu_k$} from all clients.
          \STATE \,\, {\color{red}$\vs_g \gets (1 - \alpha) \, \ttmean(\vs_{1:K}) + \alpha \left[ \delta  + \ttsum(\vu_{1:K}) \right]$} 
          \STATE $\vm_g \gets \left[ (1 - \alpha) \, \ttmean(\vs_{1:K} \vm_{1:K}) + \alpha \, \ttsum(\vv_{1:K}) \right] \textcolor{red}{\,/\, \vs_g}$
          \ENDWHILE
      \end{algorithmic}
      \setstretch{1}
          \label{alg:bayesadmmgaussian}
    \end{algorithm}  

\subsection{Deriving Federated ADMM from Bayesian Duality}
\label{ssec:derive}
We now derive the original ADMM as a special case of Bayesian-ADMM and then propose new extensions of it. All of the derivations that follow rely on a simple two-step procedure:
\begin{enumerate}
    \item Choose an EF and identify its $\vlambda, \vmu$ and $\vT(\vparam)$, for example, using \citet{NiGa09}.
    \item Derive the form of the natural gradient, for example, using \citet{KhRu23}.
\end{enumerate}
To derive ADMM as a special case, we choose the EF to be isotropic Gaussian $q(\vparam) = \gauss(\vparam\,|\,\vm, \vI)$ with mean $\vm$. For the two steps, we can write the necessary quantities as follows:
\begin{equation}
   \vlambda = \vm, \qquad \vmu = \vm, \qquad \vT(\vparam) = \vparam, \qquad \nabla \Loss_k(\vm) = \myexpect_{q_k} [\nabla \loss_k].
\label{eq:fixef}
\end{equation}
The last equality is due to Bonnet's theorem \citep{ReMo14}.
Plugging these equations in \cref{eq:bayesian_admm1,eq:bayesian_admm2,eq:bayesian_admm3}, we can write a simplified version that closely resembles ADMM. A full derivation is given in \Cref{app:recover_derivation}, and here we illustrate simplification of the client updates. We start with \cref{eq:bayesian_admm1} where, after renaming $\veta_k$ to $\vv_k$, the update simplifies to
\begin{equation}
   \vm_k \!\gets\! \underset{{\text{$\vm_k$}}}\argmin ~~  \mathbb{E}_{\text{\gauss}(\text{$\vparam$} \,|\, \text{$\vm_k$}, \text{\vI})}[\ell_k(\vparam) + \vv_k^\top \vparam]  + \frac{\rho}{2} \| \vm_k - \vm_g \|^2.
    \label{eq:random}
\end{equation}
The simplification is obtained by using the definition of the KL divergence for isotropic Gaussians, followed by some rearrangement. The update takes a similar form to ADMM where a linear term $\vv_k^\top\vparam$ is added to the loss. The  main difference is that there is an expectation over the isotropic Gaussian distribution, which is similar to those used in sharpness-aware minimization~\citep{FoKl21,MoKh22}. Similarly, by using the definitions in \cref{eq:fixef}, we can simplify \cref{eq:bayesian_admm2} to $\vv_k \gets \vv_k + \rho(\vm_k - \vm_g)$. This update has a similar property to
\cref{eq:interesting_prop} where, after every client update, the dual $\vv_k = -\myexpect_{q_k}[\nabla \loss_k]$ is the expected gradient.

To recover ADMM as a special case, we employ the delta method \citep[Sec.~1.3.1]{KhRu23} where we set $\myexpect[\loss_k(\vparam)] \approx \loss_k(\vm_k)$. It is easy to see that, after this approximation, the update is identical to \cref{eq:admmlocalfoo} if we rename $\vm_k$ and $\vm_g$ to $\vparam_k$ and $\vparam_g$, respectively. The simplification of other lines follows in a similar manner and is detailed in \cref{app:recover_derivation}.
We note that the method of \citet{SwKh25} does not recover ADMM mainly because it lacks the last KL term in \cref{eq:bayesian_adm3_dist}.

\subsection{New Newton-like and Adam-like Variants of Federated ADMM}
\label{ssec:extend}
We now present two new extensions of federated ADMM. The extensions are obtained by specializing Bayesian-ADMM to Gaussians with full covariance and diagonal covariance respectively. 
We first explain the Newton-like extension which uses full-covariance Gaussians for both server \smash{$q_g(\vparam) = \gauss(\vparam\,|\,\vm_g, \vS_g^{-1})$} and clients \smash{$q_k(\vparam) = \gauss(\vparam\,|\,\vm_k, \vS_k^{-1})$}.  The derivation exactly follows the two step procedure described earlier and is detailed in \Cref{ssec:fc}. Below, we give a brief summary.

When using full-Gaussian, the Bayesian-ADMM client step in \Cref{eq:bayesian_admm1} essentially adds a quadratic term (highlighted in red),
\begin{align}
    & (\vm_k, \vS_k) \gets \underset{{\text{$\vm_k$}, \text{$\vS_k$}}}\argmin ~ \mathbb{E}_{q_k}[\ell_k(\vparam) + \vv_k^\top \vparam {\color{red} - \half \vparam^\top \vV_k \vparam} ] + \rho \myKL{q_k}{q_g}.
   \label{eq:newtonlike_admm} 
\end{align}
This happens because the sufficient statistics is now $\vT(\vparam) = (\vparam, \vparam\vparam^\top)$, giving rise to an additional dual variable, denoted by the matrix $\vV_k$.
As shown in \cref{ssec:fc}, this dual variable is related to the Hessian, that is, $\vV_k = \mathbb{E}_{q_k}[\nabla^2 \ell_k]$ after every local update. Due to this, the local and global updates can be implemented in the form of a Newton step. Similarly to Newton's method, the new method converges in a single-step on quadratic objectives, which we show in \cref{ssec:fc} and also later verify in the experiments section. 

We also derive a scalable Adam-like variant of ADMM by using Gaussians with diagonal covariances. The resulting method is shown in~\Cref{alg:bayesadmmgaussian} and makes use of the Improved Variational Online Newton (IVON) optimizer of \citet{shen2024variational} to solve the client subproblem. For this reason, we call it IVON-ADMM. A detailed derivation of this method is given in~\Cref{app:ivonderivation}.  Implementation of IVON-ADMM is nearly identical to the standard ADMM and
costs are nearly identical too.
Lines 4 and 5 in \cref{alg:bayesadmmgaussian} solve the client subproblem in Bayesian-ADMM which takes a similar form to \Cref{eq:newtonlike_admm} but with diagonal covariances. In practice, this is a standard application of the IVON optimizer, and more details on how exactly it is used are in~\Cref{app:ivondetails}. 
Lines 6 and 7 implement the dual update in \Cref{eq:bayesian_admm2} in Bayesian-ADMM and involve just simple pointwise arithmetic operations. We have introduced a different step-size $\gamma$ in the dual, as we found this to improve the practical performance. Finally, lines 9-11 implement the server step in \Cref{eq:bayesian_admm3} in Bayesian-ADMM. 

All changes are simple pointwise operations and easy to implement, and the IVON optimizer has no overhead over training with Adam~\citep{shen2024variational}.  
Therefore the complexity is the same as federated ADMM algorithms such as FedDyn~\citep{acar2021feddyn}.
Communication cost is doubled compared to ADMM as we send both a mean and diagonal variance vector, as in other preconditioned or Bayesian methods~\citep{SwKh25}. This is expected to help in cases with heterogeneity.

\begin{figure*}[t!]
  \centering
  \includegraphics[width=5.5in]{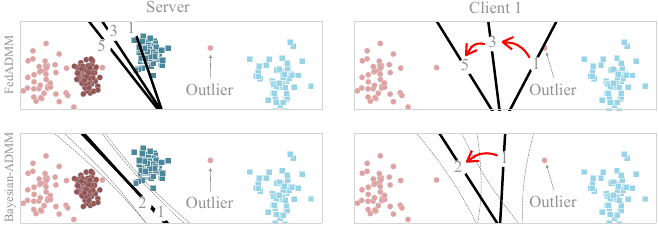}
  \caption{
    A single outlier that slows down ADMM (top row) poses no issues for our new Bayesian version Bayesian-ADMM (bottom row). The server (left column) with ADMM takes 5 iterations while with Bayesian-ADMM it only needs 2 iterations \revision{(decision boundaries are numbered with iteration number)}. Client 1 (right column) is the source of the issue which takes 5 iterations to ignore the outlier, while with Bayes, it is much faster due to the use of uncertainty (gray lines). Light points indicate data on client 1, dark points indicate data on client 2 (not shown).}
     \label{fig:fig1}
\end{figure*}

\section{Numerical Experiments}
\label{sec:experiments}

\textbf{Illustrative examples.}
We show how the Newton-like variant of Bayesian-ADMM can better deal with heterogeneous and noisy data in \cref{fig:fig1}. We compare ADMM to the Newton-like variant in linear classification: data is split over two clients, and client 1 contains an outlier.
Federated ADMM takes many (five) communication rounds to resolve the issue due to the outlier. 
The Newton-like Bayesian-ADMM assigns high uncertainty to the outlier and quickly adapts in a single round, showing the benefits of using posterior uncertainties. Details are in \Cref{app:fig1}.

In \Cref{fig:singleround}, we show a linear regression problem with full Gaussian posteriors (details in \Cref{app:ridgeregression}), where Bayesian-ADMM with full covariances converges in a single round. This is similar to Newton-methods on quadratic functions, but there are no previous generalizations of such results to ADMM. In \Cref{fig:pvi} we compare the Newton-like Bayesian-ADMM method with full covariances to PVI and BregmanADMM on logistic regression (details are in \Cref{app:logreg}). We see that PVI without damping diverges, whereas PVI with damping converges, but slower than Bayesian-ADMM.  

\textbf{Evaluation on Federated Deep Learning Benchmarks.}
We follow a similar setup to \citet{SwKh25} and train neural nets on image datasets.
We consider different levels of heterogeneity on the clients and different numbers of clients. 
We focus on comparing IVON-ADMM (\Cref{alg:bayesadmmgaussian}) to 
(i) the distributed Bayesian algorithms FedLap and FedLap-Cov \citep{SwKh25}, 
(ii) FedDyn \citep{acar2021feddyn}, which is the best-performing federated ADMM-style algorithm, and 
(iii) standard baseline algorithms for 
federated learning, FedAvg \citep{mcmahan2016fedavg} and FedProx \citep{li2020fedprox}. 
Further hyperparameters and details are in \Cref{app:hyper}. 

Overall, we find that IVON-ADMM outperforms baselines in most scenarios; IVON-ADMM has lower test loss than baselines, showing the benefits of a Bayesian method; and IVON-ADMM is significantly computationally cheaper than the second-best method FedLap-Cov. Ensembling predictions over the posterior at test-time usually improves the performance. 

\renewcommand{\thesubfigure}{\alph{subfigure}}
\begin{figure}
  \centering
  \begin{subfigure}[t]{0.33\linewidth}
      \includegraphics{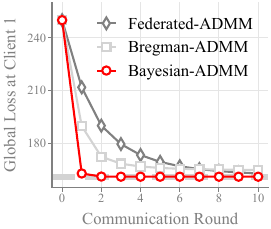}
      \caption{Ridge Regression (Quadratic)}
      \label{fig:singleround}
  \end{subfigure}
  \begin{subfigure}[t]{0.35\linewidth}
      \includegraphics{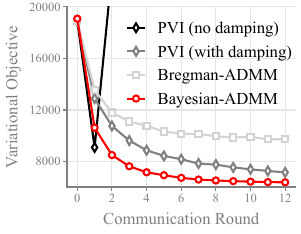}
      \caption{Logistic Regression} 
      \label{fig:pvi}
  \end{subfigure}
  \begin{subfigure}[t]{0.30\linewidth}
      \includegraphics{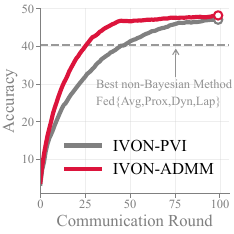}
      \caption{ResNet-20 (CIFAR-100)} 
      \label{fig:pvi2}
  \end{subfigure}
  \caption{(a) Bayesian-ADMM converges in a single round for quadratics, whereas other methods need many steps. (b) PVI can diverge on logistic regression (MNIST). Damping used by \citet{AsBu22} improves this, but is slower than Bayesian-ADMM, also for CIFAR-100 in plot (c).} 
   \vspace{-0.3cm}
\end{figure}

\begin{table*}[t!]
    \caption{
    IVON-ADMM outperforms all baselines in terms of accuracy and NLL.
    Averaging over the posterior in IVON-ADMM (as opposed to IVON-ADMM@$\vm$) often improves performance. 
    }
    \centering
    \resizebox{0.99\linewidth}{!}{
    \begin{tabular}{llcccccc}
    \toprule
     & & \multicolumn{3}{c}{\textbf{Test accuracy ($\uparrow$ larger is better)}} & \multicolumn{3}{c}{\textbf{Test NLL ($\downarrow$ smaller is better)}} \\
    \textbf{Scenario} & \textbf{Method} & 10 rounds & 25 rounds & 50 rounds & 10 rounds & 25 rounds & 50 rounds\\
    \midrule
    \multirow{6}{*}{\parbox{2.5cm}{\vspace{-1cm}MLP, 100 clients \\ heterog. \\ FMNIST}}  
    & FedAvg/FedProx                                & 73.9$\pm$0.3 & 78.7$\pm$0.3 & 81.8$\pm$0.3 & 0.73$\pm$0.01 & 0.60$\pm$0.01 & 0.53$\pm$0.01 \\
    & FedDyn                                & 75.7$\pm$0.4 & 81.4$\pm$0.2 & 82.2$\pm$0.3 & 0.66$\pm$0.01 & 0.53$\pm$0.02 & 0.51$\pm$0.01 \\
    & FedLap                                & 73.9$\pm$0.3 & 79.4$\pm$0.2 & 82.4$\pm$0.3 & 0.68$\pm$0.01 & 0.59$\pm$0.00 & 0.53$\pm$0.01 \\
    & FedLap-Cov                            & 76.9$\pm$0.7 & 81.3$\pm$0.2 & {\bf 83.0}$\pm$0.1 & 0.64$\pm$0.01 & 0.54$\pm$0.00 & 0.49$\pm$0.01 \\
    \rowcolor[gray]{0.96} & IVON-ADMM@$\vm_g$ & {\bf 78.7}$\pm$0.1 & {\bf 82.2}$\pm$0.1 & {\bf 82.9}$\pm$0.2 & {\bf 0.62}$\pm$0.00 & {\bf 0.50}$\pm$0.00 & {\bf 0.48}$\pm$0.00 \\
    \rowcolor[gray]{0.96} & IVON-ADMM & {\bf 78.7}$\pm$0.1 & {\bf 82.3}$\pm$0.1 & {\bf 83.0}$\pm$0.2 & {\bf 0.62}$\pm$0.00 & {\bf 0.50}$\pm$0.00 & {\bf 0.48}$\pm$0.00 \\
       \midrule
    \multirow{7}{*}{\parbox{2.5cm}{MLP, 10 clients\\ homogeneous \\ FMNIST$^{(*)}$}}  
    & FedAvg                                & 72.3$\pm$0.4 & 77.7$\pm$0.3 & 80.0$\pm$0.2 & 0.70$\pm$0.00 & 0.61$\pm$0.01 & 0.56$\pm$0.01 \\
    & FedProx                               & 72.2$\pm$0.3 & 77.4$\pm$0.1 & 80.3$\pm$0.1 & 0.71$\pm$0.00 & 0.61$\pm$0.01 & 0.56$\pm$0.01 \\
    & FedDyn                                & 75.3$\pm$0.8 & 77.5$\pm$0.8 & 78.2$\pm$0.5 & 0.67$\pm$0.01 & 0.63$\pm$0.02 & 0.63$\pm$0.02 \\
    & FedLap                                & 72.1$\pm$0.2 & 77.1$\pm$0.1 & 80.2$\pm$0.1 & 0.71$\pm$0.01 & 0.62$\pm$0.01 & 0.57$\pm$0.01 \\
    & FedLap-Cov                            & 75.0$\pm$0.6 & 79.8$\pm$0.4 & 81.8$\pm$0.1 & 0.67$\pm$0.01 & 0.59$\pm$0.01 & 0.56$\pm$0.01 \\
    \rowcolor[gray]{0.96} & IVON-ADMM@$\vm_g$ & {\bf 80.4}$\pm$0.2 & {\bf 83.1}$\pm$0.1 & {\bf 83.4}$\pm$0.1 & {\bf 0.56}$\pm$0.00 & {\bf 0.53}$\pm$0.00 & 0.60$\pm$0.01 \\
    \rowcolor[gray]{0.96} & IVON-ADMM & {\bf 80.6}$\pm$0.2 & {\bf 83.5}$\pm$0.1 & {\bf 84.1}$\pm$0.2 & {\bf 0.55}$\pm$0.00 & {\bf 0.50}$\pm$0.00 & {\bf 0.53}$\pm$0.01 \\
    \midrule
    \multirow{6}{*}{\parbox{2.7cm}{\vspace{-0.8cm}CNN \\ \citep{zenke2017continual}\\ 10 clients\\ heterog. CIFAR-10}}  
    & FedAvg/FedProx                               & 73.8$\pm$1.5 & 75.0$\pm$0.9 & 75.4$\pm$0.8 & 0.9$\pm$0.0 & 1.2$\pm$0.1 & 1.5$\pm$0.1 \\
    & FedDyn                                & 72.7$\pm$0.9 & 77.4$\pm$0.6 & 79.4$\pm$0.4 & 1.0$\pm$0.0 & 0.8$\pm$0.0 & 0.7$\pm$0.0 \\
    & FedLap                                & \textbf{74.8}$\pm$1.3 & {78.0}$\pm$1.2 & 79.5$\pm$1.4 & \textbf{0.7}$\pm$0.0 & \textbf{0.6}$\pm$0.0 & \textbf{0.6}$\pm$0.0 \\
    & FedLap-Cov                            & \textbf{75.1}$\pm$1.1 & {77.6}$\pm$0.7 & 79.2$\pm$0.9 & \textbf{0.7}$\pm$0.0 & \text{0.7}$\pm$0.0 & \textbf{0.6}$\pm$0.0 \\
\rowcolor[gray]{0.96} & IVON-ADMM@$\vm_g$ & 71.5$\pm$1.8 & \textbf{78.9}$\pm$0.8 & \textbf{80.3}$\pm$0.6 & 0.8$\pm$0.0 & \textbf{0.6}$\pm$0.0 & \textbf{0.6}$\pm$0.0 \\
\rowcolor[gray]{0.96} & IVON-ADMM & 71.5$\pm$1.8 & \textbf{79.0}$\pm$0.8 & \textbf{80.3}$\pm$0.6 & 0.8$\pm$0.0 & \textbf{0.6}$\pm$0.0 & \textbf{0.6}$\pm$0.0 \\
    \midrule
    \multirow{6}{*}{\parbox{2.7cm}{\vspace{-0.8cm} CNN \\ \citep{acar2021feddyn}\\ 10 clients\\ heterog. CIFAR-10}}  
    & FedAvg/FedProx                               & {\bf 64.3}$\pm$2.0 & 65.9$\pm$1.6 & 66.3$\pm$1.4 & 1.7$\pm$0.1 & 2.2$\pm$0.1 & 2.6$\pm$0.1 \\
    & FedDyn                                & 63.6$\pm$1.1 & 64.7$\pm$0.6 & 65.4$\pm$1.0 & 2.0$\pm$0.6 & 3.6$\pm$1.3 & 2.6$\pm$0.1 \\
    & FedLap                                & 60.2$\pm$2.4 & 66.4$\pm$1.1 & 66.5$\pm$1.2 & \text{1.1}$\pm$0.0 & \text{1.3}$\pm$0.1 & 1.8$\pm$0.1 \\
    & FedLap-Cov                            & 58.4$\pm$2.4 & 65.4$\pm$1.1 & 67.5$\pm$1.2 & \text{1.2}$\pm$0.0 & \textbf{1.0}$\pm$0.0 & \textbf{1.0}$\pm$0.0 \\
    \rowcolor[gray]{0.96} & IVON-ADMM@$\vm_g$ & \textbf{63.8}$\pm$1.4 & {\bf 69.5}$\pm$0.8 & {\bf 70.2}$\pm$0.7 & {\bf 1.0}$\pm$0.0 & {\bf 1.0}$\pm$0.0 & 1.5$\pm$0.0 \\
    \rowcolor[gray]{0.96} & IVON-ADMM       & \textbf{63.8}$\pm$1.4 & {\bf 69.5}$\pm$0.8 & {\bf 70.3}$\pm$0.6 & {\bf 1.0}$\pm$0.0 & {\bf 1.0}$\pm$0.0 & 1.4$\pm$0.0 \\
    \bottomrule
    \end{tabular}}
    \label{tab:deeplearning10}
    \end{table*}
\textbf{Scenarios and splits.} We consider fully connected neural networks on the MNIST
and FashionMNIST datasets, and convolutional networks on CIFAR-10 and 100, following the setup in~\citet{SwKh25}.  
We use homogeneous and heterogeneous splits on $K=10$ and $K=100$ clients. 
On CIFAR-100 we use a ResNet-20 architecture. 
For all heterogeneous splits, we follow previous work and sample Dirichlet distributions that decide how many points per class go into each client \citep{SwKh25}. 
For the highly-heterogeneous MNIST split on MNIST ($K=100$ clients), we assign data from only two random classes per client~\citep{mcmahan2016fedavg}. 

Our results are summarized in~\Cref{tab:deeplearning10,tab:deeplearning100,tab:deeplearning10_app}, as well as~\Cref{fig:convergence_teaser}.
For IVON-ADMM@$\vm_g$ we evaluate using the server's $\vm_g$, while for IVON-ADMM, we ensemble predictions across 32 samples from $q_g$. We average accuracy and test loss (NLL) over three seeds after $10$, $25$, $50$ and $100$ rounds.

\textbf{IVON-ADMM is overall the best method.} 
We compare IVON-ADMM to other baselines for both 10 and 100-client splits on FMNIST and CIFAR-10 (\Cref{tab:deeplearning10}). It is much better than all five baselines on FMNIST in terms of test NLL and accuracy, at all rounds. 
\revision{In particular, we (i) improve upon FedDyn (which is Federated ADMM), showing the importance of including posterior covariances in IVON-ADMM; (ii) improve upon FedLap and FedLap-Cov, showing the importance of the Bayesian-ADMM update equations and of using IVON instead of a covariance obtained via Laplace approximations.}

\textbf{For longer runs, IVON-ADMM significantly outperforms all other methods.} 
We compare IVON-ADMM to baselines for a 100-client highly heterogeneous MNIST split, and a 10-client heterogeneous CIFAR-100 split, in \Cref{tab:deeplearning100}. 
We see that IVON-ADMM significantly outperforms all baselines, showing the benefits with more clients and over more communication rounds. 
After 100 rounds on CIFAR-100, accuracy is improved by 6.7\% and test NLL by 0.9. 

\textbf{IVON-ADMM has less overhead than FedLap-Cov.} 
IVON-ADMM has similar cost and runtime to FedAvg, FedProx, FedDyn and FedLap, despite estimating a (diagonal) covariance matrix. 
FedLap-Cov requires a Laplace approximation to the covariance matrix, which is slow and memory-intensive: FedLap-Cov is 4 times slower than FedLap on MNIST, 7 times slower on CIFAR-10, and prohibitively slow on CIFAR-100, even though we use LaplaceRedux~\citep{daxberger2021laplace}. 

\revision{
\textbf{Faster convergence than IVON-PVI.} 
We introduce a new method, IVON-PVI, which is a special case of our IVON-ADMM (with $\alpha=1$, $\rho=1$). 
In \Cref{fig:pvi2} we see that the additional step-sizes in IVON-ADMM means that it converges quicker than IVON-PVI, though the final solution is of comparable quality. Additional results are in \Cref{app:abl_pvi}. 

\textbf{Sensitivity to Hyperparameters.}  We perform an ablation study over the step size ($\rho$) and the temperature ($\tau$) of IVON-ADMM. The results are discussed in \Cref{app:abl_hyperparam}, where the main findings are: (1) Larger $\rho$ leads to more stable dynamics but lower  accuracy, (2) The optimal temperature is at around $0.1$ -- $0.2$ and deviating too much from it can cause degradations in performance. 
}

\begin{table*}[t!]
    \centering
    \caption{Test accuracy and NLL after 25, 50 and 100 rounds, with mean and standard deviations over 3 runs. 
    We see that IVON-ADMM outperforms all baselines in both scenarios.} 
    \resizebox{\linewidth}{!}{
    \begin{tabular}{llcccccc}
    \toprule
     & & \multicolumn{3}{c}{\textbf{Test accuracy ($\uparrow$ larger is better)}} & \multicolumn{3}{c}{\textbf{Test NLL ($\downarrow$ smaller is better)}} \\
    \textbf{Scenario} & \textbf{Method} & 25 rounds & 50 rounds & 100 rounds & 25 rounds & 50 rounds & 100 rounds\\
    \midrule
    \multirow{6}{*}{\parbox{2.5cm}{\vspace{-1cm}MLP, 100 clients \\ highly heterog. \\ MNIST}}
    & FedAvg/FedProx                               & 89.6$\pm$0.4 & 91.8$\pm$0.5 & 94.0$\pm$0.4 & 0.59$\pm$0.01 & 0.36$\pm$0.01 & 0.22$\pm$0.01 \\
    & FedDyn                                & \text{89.8}$\pm$0.1 & 92.6$\pm$0.2 & 94.9$\pm$0.1 & 0.60$\pm$0.01 & 0.33$\pm$0.01 & 0.18$\pm$0.00 \\
    & FedLap                                & 89.7$\pm$0.3 & 92.0$\pm$0.4 & 94.4$\pm$0.3 & 0.63$\pm$0.01 & 0.38$\pm$0.01 & 0.22$\pm$0.01 \\
    & FedLap-Cov                            & \textbf{91.0}$\pm$0.4 & 92.9$\pm$0.4 & 94.9$\pm$0.3 & 0.47$\pm$0.01 & 0.29$\pm$0.01 & 0.18$\pm$0.01 \\
    \rowcolor[gray]{0.96} & IVON-ADMM@$\vm_g$ & \textbf{91.0}$\pm$0.1 & \textbf{94.2}$\pm$0.1 & \textbf{96.0}$\pm$0.1 & \textbf{0.31}$\pm$0.02 & \textbf{0.19}$\pm$0.00 & \textbf{0.13}$\pm$0.00 \\
    \rowcolor[gray]{0.96} & IVON-ADMM       & \textbf{91.0}$\pm$0.1 & \textbf{94.2}$\pm$0.1 & \textbf{96.0}$\pm$0.1 & \textbf{0.31}$\pm$0.03 & \textbf{0.19}$\pm$0.01 & \textbf{0.13}$\pm$0.00 \\
    \midrule
    \multirow{6}{*}{\parbox{2.9cm}{\vspace{-1cm}ResNet-20,  \\ 10 clients, \\ heterog. CIFAR-100}}  
    & FedAvg/FedProx                               & 37.9$\pm$0.4 & 40.4$\pm$0.9 & 39.8$\pm$0.8 &  2.4$\pm$0.0 &  2.6$\pm$0.1 &  3.4$\pm$0.1 \\
    & FedDyn                                & 38.4$\pm$0.5 & 39.2$\pm$0.8 & 39.6$\pm$0.4 &  3.5$\pm$0.3 &  3.2$\pm$0.2 &  3.2$\pm$0.1 \\
    & FedLap                                & \textbf{40.1}$\pm$1.1 & 40.4$\pm$1.0 & 39.7$\pm$0.7 &  2.9$\pm$0.1 &  3.0$\pm$0.0 &  3.2$\pm$0.0 \\
    \rowcolor[gray]{0.96} & IVON-ADMM@$\vm_g$  & {\bf 40.0}$\pm$1.0 & {\bf 46.2}$\pm$0.4 & {\bf 46.5}$\pm$0.6  & {\bf 2.3}$\pm$0.0 & {\bf 2.1}$\pm$0.0 & {\bf 2.3}$\pm$0.1 \\
    \rowcolor[gray]{0.96} & IVON-ADMM  & {\bf 40.0}$\pm$1.0 & {\bf 46.2}$\pm$0.4 & {\bf 46.6}$\pm$0.6  & {\bf 2.3}$\pm$0.0 & {\bf 2.1}$\pm$0.0 & {\bf 2.2}$\pm$0.1 \\
    \bottomrule
    \end{tabular}}
    \label{tab:deeplearning100}
    \end{table*}
\section{Discussion}
\label{sec:discussion}
We introduced a Bayesian duality, from which we naturally extended ADMM to optimize over distributions.
For Gaussians with fixed variance, we recover regular ADMM, and general Gaussians give Newton-like methods and IVON-ADMM. These show good performance when compared to recent baselines. Other approximating distributions may lead to new interesting splitting algorithms, and more generally, our work opens up new research paths to extend and improve primal-dual algorithms using Bayesian ideas. 

\subsubsection*{Acknowledgements}
This work is supported by JST CREST Grant Number JP-MJCR2112. This work used computational resources on the TSUBAME4.0 supercomputer provided by Institute of Science Tokyo.
This material is based upon work supported by the National Science Foundation under Grant No. IIS-2107391. Any opinions, findings, and conclusions or recommendations expressed in this material are those of the author(s) and do not necessarily reflect the views of the National Science Foundation.

\bibliographystyle{iclr2026_conference}
\bibliography{references}

@string{aistats = {International Conference on Artificial Intelligence and Statistics (AISTATS)}}

@string{cvpr = {IEEE Conference on Computer Vision and Pattern Recognition (CVPR)}}

@string{iccv = {International Conference on Computer Vision (ICCV)}}

@string{iclr = {International Conference on Learning Representations (ICLR)}}

@string{icml = {International Conference on Machine Learning (ICML)}}

@string{jmlr = {J. Mach. Learn. Res. (JMLR)}}

@string{mp = {Math. Program.}}

@string{nips = {Advances in Neural Information Processing Systems (NeurIPS)}}

@string{siconopt = {SIAM J. Control Optim.}}

@string{tpami = {IEEE Trans. Pattern Anal. Mach. Intell. (PAMI)}}

@string{uai = {Conference on Uncertainty in Artificial Intelligence (UAI)}}

@book{bertsekas2014constrained,
  title={Constrained optimization and {L}agrange multiplier methods},
  author={Bertsekas, Dimitri P},
  year={2014},
  publisher={Academic press}
}

@article{oikonomidis2023global,
  title={Global Convergence Analysis of the Power Proximal Point and Augmented {L}agrangian Method},
  author={Oikonomidis, Konstantinos A and Bodard, Alexander and Laude, Emanuel and Patrinos, Panagiotis},
  journal={arXiv:2312.12205},
  year={2023}
}

@inproceedings{mishchenko2022proxskip,
  title={{ProxSkip}: Yes! {L}ocal gradient steps provably lead to communication acceleration! {F}inally!},
  author={Mishchenko, Konstantin and Malinovsky, Grigory and Stich, Sebastian and Richt{\'a}rik, Peter},
  booktitle=icml,
  year={2022},
}

@inproceedings{zenke2017continual,
  title={Continual learning through synaptic intelligence},
  author={Zenke, Friedemann and Poole, Ben and Ganguli, Surya},
  booktitle=icml, 
  year={2017},
}

@ARTICLE{zhou2023fedadmm,
  author={Zhou, Shenglong and Li, Geoffrey Ye},
  journal=tpami, 
  title={Federated Learning Via Inexact {ADMM}}, 
  year={2023},
  volume={45},
  number={8},
  pages={9699-9708},
}

@ARTICLE{zhang2021fedpd,
  author={Zhang, Xinwei and Hong, Mingyi and Dhople, Sairaj and Yin, Wotao and Liu, Yang},
  journal={IEEE Trans. Signal Process.}, 
  title={{FedPD}: A Federated Learning Framework With Adaptivity to Non-IID Data}, 
  year={2021},
  volume={69},
  pages={6055-6070},
}

@INPROCEEDINGS{gong2022fedadmm,
  author={Gong, Yonghai and Li, Yichuan and Freris, Nikolaos M.},
  booktitle={International Conference on Data Engineering (ICDE)}, 
  title={{FedADMM}: A Robust Federated Deep Learning Framework with Adaptivity to System Heterogeneity}, 
  year={2022},
}

@inproceedings{mcmahan2016fedavg,
  title={Communication-Efficient Learning of Deep Networks from Decentralized Data},
  author={H. B. McMahan and Eider Moore and Daniel Ramage and Seth Hampson and Blaise Ag{\"u}era~y~Arcas},
  booktitle=aistats, 
  year={2016},
}

@inproceedings{li2020fedprox,
 author = {Li, Tian and Sahu, Anit Kumar and Zaheer, Manzil and Sanjabi, Maziar and Talwalkar, Ameet and Smith, Virginia},
 booktitle = {Proceedings of Machine Learning and Systems},
 title = {Federated Optimization in Heterogeneous Networks},
 year = {2020}
}

@inproceedings{guo2023federated,
title={Federated Learning as Variational Inference: A Scalable Expectation Propagation Approach},
author={Han Guo and Philip Greengard and Hongyi Wang and Andrew Gelman and Yoon Kim and Eric Xing},
booktitle=iclr, 
year={2023},
}

@article{wang2022fedadmm,
  title={Fed{ADMM}: A federated primal-dual algorithm allowing partial participation},
  author={Han Wang and Siddartha Marella and James Anderson},
  journal={Conference on Decision and Control (CDC)},
  year={2022},
}

@article{babagholami2015d,
  title={{D-MFVI}: Distributed Mean Field Variational Inference using {B}regman {ADMM}},
    author={Babagholami-Mohamadabadi, Behnam and Yoon, Sejong and Pavlovic, Vladimir},
      journal={arXiv:1507.00824},
        year={2015}
        }

@inproceedings{alshedivat2021federated,
title={Federated Learning via Posterior Averaging: A New Perspective and Practical Algorithms},
author={Maruan Al-Shedivat and Jennifer Gillenwater and Eric Xing and Afshin Rostamizadeh},
booktitle=iclr,
year={2021},
}

@inproceedings{acar2021feddyn,
title={Federated Learning Based on Dynamic Regularization},
author={Durmus Alp Emre Acar and Yue Zhao and Ramon Matas and Matthew Mattina and Paul Whatmough and Venkatesh Saligrama},
booktitle=iclr, 
year={2021},
}

@inproceedings{shen2024variational,
  title={Variational learning is effective for large deep networks},
  author={Shen, Yuesong and Daheim, Nico and Cong, Bai and Nickl, Peter and Marconi, Gian Maria and Bazan, Clement and Yokota, Rio and Gurevych, Iryna and Cremers, Daniel and Khan, Mohammad Emtiyaz and M{\"o}llenhoff, Thomas},
  booktitle=icml, 
  year={2024}
}

@article{WaBa14,
	author = {Wang, Huahua and Banerjee, Arindam},
	journal = nips,
	title = {{B}regman alternating direction method of multipliers},
	year = {2014}}

@article{Ts91,
	author = {Tseng, Paul},
	journal = siconopt,
	number = {1},
	pages = {119--138},
	title = {Applications of a splitting algorithm to decomposition in convex programming and variational inequalities},
	volume = {29},
	year = {1991}}

@article{GaMe76,
	author = {Gabay, Daniel and Mercier, Bertrand},
	journal = {Computers \& Mathematics with Applications},
	number = {1},
	pages = {17--40},
	title = {A dual algorithm for the solution of nonlinear variational problems via finite element approximation},
	volume = {2},
	year = {1976}}

@article{GlMa75,
	author = {Glowinski, Roland and Marroco, Americo},
	journal = {Revue fran{\c{c}}aise d'automatique, informatique, recherche op{\'e}rationnelle. Analyse num{\'e}rique},
	number = {R2},
	pages = {41--76},
	title = {Sur l'approximation, par {\'e}l{\'e}ments finis d'ordre un, et la r{\'e}solution, par p{\'e}nalisation-dualit{\'e} d'une classe de probl{\`e}mes de {D}irichlet non lin{\'e}aires},
	volume = {9},
	year = {1975}}

@inproceedings{MoKh22,
	author = {M{\"o}llenhoff, Thomas and Khan, Mohammad Emtiyaz},
	booktitle = iclr,
	title = {{SAM} as an Optimal Relaxation of {B}ayes},
	year = {2023}}

@inproceedings{AdCh21,
	author = {Vincent Adam and Paul E. Chang and Mohammad Emtiyaz Khan and Arno Solin},
	booktitle = nips,
	title = {Dual Parameterization of Sparse Variational {G}aussian Processes},
	year = {2021}}

@inproceedings{HeZh16,
	author = {Kaiming He and Xiangyu Zhang and Shaoqing Ren and Jian Sun},
	booktitle = cvpr,
	title = {Deep Residual Learning for Image Recognition},
	year = {2016}}

@article{BoPa11,
	author = {Stephen P. Boyd and Neal Parikh and Eric Chu and Borja Peleato and Jonathan Eckstein},
	journal = {Found. Trends Mach. Learn.},
	number = {1},
	pages = {1--122},
	title = {Distributed Optimization and Statistical Learning via the Alternating Direction Method of Multipliers},
	volume = {3},
	year = {2011}}

@inproceedings{MaMa11,
	author = {Malag{\`o}, Luigi and Matteo Matteucci and Giovanni Pistone},
	booktitle = {Foundations of Genetic Algorithms XI, {FOGA}},
	title = {Towards the geometry of estimation of distribution algorithms based on the exponential family},
	year = {2011}}

@article{EcFe99,
	author = {Jonathan Eckstein and Michael C. Ferris},
	journal = mp,
	number = {1},
	pages = {65--90},
	title = {Smooth methods of multipliers for complementarity problems},
	volume = {86},
	year = {1999}}

@inproceedings{KhLi17,
	author = {Mohammad Emtiyaz Khan and Wu Lin},
	booktitle = aistats,
	title = {Conjugate-Computation Variational Inference: Converting Variational Inference in Non-Conjugate Models to Inferences in Conjugate Models},
	year = {2017}}

@inproceedings{KhNi18b,
	author = {Khan, Mohammad Emtiyaz and Nielsen, Didrik},
	booktitle = {International Symposium on Information Theory and Its Applications {(ISITA)}},
	title = {Fast yet simple natural-gradient descent for variational inference in complex models},
	year = {2018}}

@article{RaMu15,
	author = {Garvesh Raskutti and Sayan Mukherjee},
	journal = {{IEEE} Trans. Inf. Theory},
	number = {3},
	pages = {1451--1457},
	title = {The Information Geometry of Mirror Descent},
	volume = {61},
	year = {2015}}

@inproceedings{Mi01,
	author = {Thomas P. Minka},
	booktitle = uai,
	title = {Expectation Propagation for Approximate {B}ayesian Inference},
	year = {2001}}

@article{WaJo08,
	author = {Wainwright, Martin J and Jordan, Michael I and others},
	journal = {Foundations and Trends{\textregistered} in Machine Learning},
	number = {1--2},
	pages = {1--305},
	title = {Graphical models, exponential families, and variational inference},
	volume = {1},
	year = {2008}}

@article{Ro67,
	author = {Rockafellar, Ralph Tyrrell},
	journal = {Pacific J. Math.},
	number = {1},
	pages = {167--187},
	title = {Duality and stability in extremum problems involving convex functions},
	volume = {21},
	year = {1967}}

@inproceedings{PoCh11,
	author = {T. Pock and A. Chambolle},
	booktitle = iccv,
	title = {Diagonal preconditioning for first order primal-dual algorithms in convex optimization},
	year = {2011}}

@article{GoLi13,
	author = {Goldstein, Tom and Li, Min and Yuan, Xiaoming and Esser, Ernie and Baraniuk, Richard},
	title = {Adaptive primal-dual hybrid gradient methods for saddle-point problems},
	journal = {arXiv:1305.0546},
	year = {2013}}

@inproceedings{ReMo14,
	author = {Rezende, Danilo Jimenez and Mohamed, Shakir and Wierstra, Daan},
	title = {Stochastic backpropagation and approximate inference in deep generative models},
	booktitle = icml, 
	year = {2014}}

@article{ChPo16b,
	author = {Chambolle, Antonin and Pock, Thomas},
	journal = mp,
	number = {1-2},
	pages = {253--287},
	title = {On the ergodic convergence rates of a first-order primal--dual algorithm},
	volume = {159},
	year = {2016}}

@inproceedings{KhAr13,
	author = {Khan, Mohammad Emtiyaz and Aravkin, Aleksandr and Friedlander, Michael and Seeger, Matthias},
	booktitle = icml,
	title = {Fast dual variational inference for non-conjugate latent {G}aussian models},
	year = {2013}}

@article{NiGa09,
	author = {Nielsen, Frank and Garcia, Vincent},
	title = {Statistical exponential families: A digest with flash cards},
	journal = {arXiv:0911.4863}, 
	year = {2009}}

@inproceedings{FoKl21,
	author = {Foret, Pierre and Kleiner, Ariel and Mobahi, Hossein and Neyshabur, Behnam},
	booktitle = iclr,
	title = {Sharpness-Aware Minimization for Efficiently Improving Generalization},
	year = {2021}}

@article{EcBe92,
	author = {Eckstein, Jonathan and Bertsekas, Dimitri P},
	journal = mp,
	number = {1},
	pages = {293--318},
	title = {On the {D}ouglas-{R}achford Splitting Method and the Proximal Point Algorithm for Maximal Monotone Operators},
	volume = {55},
	year = {1992}}

@article{KhRu23,
	author = {Khan, Mohammad Emtiyaz and Rue, Haavard},
	journal = jmlr,
	number = {281},
	pages = {1--46},
	title = {The {B}ayesian Learning Rule},
	volume = {24},
	year = {2023}}

@book{Am16,
	author = {Shun{-}Ichi Amari},
	publisher = {Springer},
	title = {Information Geometry and its Applications},
	year = {2016}}

@article{AsBu22,
  title={Partitioned variational inference: A framework for probabilistic federated learning},
  author={Ashman, Matthew and Bui, Thang D and Nguyen, Cuong V and Markou, Stratis and Weller, Adrian and Swaroop, Siddharth and Turner, Richard E},
  journal={arXiv:2202.12275},
  year={2022}
}

@article{winn2005variational,
  title={Variational message passing.},
  author={Winn, John and Bishop, Christopher M and Jaakkola, Tommi},
  journal=jmlr, 
  volume={6},
  number={4},
  year={2005}
}

@book{koller2009probabilistic,
  title={Probabilistic graphical models: principles and techniques},
  author={Koller, Daphne and Friedman, Nir},
  year={2009},
  publisher={MIT press}
}

@inproceedings{khan2014decoupled,
  title={Decoupled variational {G}aussian inference},
  author={Khan, Mohammad Emtiyaz},
  booktitle=nips,
  year={2014}
}

@inproceedings{SwKh25, 
	author = {Siddharth Swaroop and Mohammad Emtiyaz Khan and Finale Doshi-Velez}, 
	title = {Connecting Federated {ADMM} to {B}ayes},
	year = {2025},
	booktitle = iclr
}

@article{louizos2021expectation,
  title={An expectation-maximization perspective on federated learning},
  author={Louizos, Christos and Reisser, Matthias and Soriaga, Joseph and Welling, Max},
  journal={arXiv:2111.10192},
  year={2021}
}

@inproceedings{fedpop,
  title={{FedPop}: A {B}ayesian approach for personalised federated learning},
  author={Kotelevskii, Nikita and Vono, Maxime and Durmus, Alain and Moulines, Eric},
  booktitle=nips, 
  year={2022}
}

@inproceedings{yurochkin2019bayesian,
  title={Bayesian nonparametric federated learning of neural networks},
  author={Yurochkin, Mikhail and Agarwal, Mayank and Ghosh, Soumya and Greenewald, Kristjan and Hoang, Nghia and Khazaeni, Yasaman},
  booktitle=icml, 
  year={2019},
}

@inproceedings{pal2024simple,
  title={Simple and Scalable Federated Learning with Uncertainty via {I}mproved {V}ariational {O}nline {N}ewton},
  author={Pal, Shivam and Gupta, Aishwarya and Sarwar, Saqib and Rai, Piyush},
  booktitle={NeurIPS Workshop on Optimization for Machine Learning (OPT)},
  year=2024
}

@inproceedings{daxberger2021laplace,
  title={Laplace Redux--Effortless {B}ayesian Deep Learning},
  author={Erik Daxberger and Agustinus Kristiadi and Alexander Immer
          and Runa Eschenhagen and Matthias Bauer and Philipp Hennig},
  booktitle=nips,
  year={2021}
}

@inproceedings{ye2020optimization,
  title={Optimization of graph total variation via active-set-based combinatorial reconditioning},
  author={Ye, Zhenzhang and M{\"o}llenhoff, Thomas and Wu, Tao and Cremers, Daniel},
  booktitle=aistats, 
  year={2020},
}

@misc{luque1986nonlinear,
  title={The nonlinear proximal point algorithm and multiplier methods},
  author={Luque, Javier},
  year={1986},
  howpublished={Laboratory for Information and Decision Systems, MIT}
}

@article{wang2014convergence,
  title={Convergence of {B}regman alternating direction method with multipliers for nonconvex composite problems},
  author={Wang, Fenghui and Xu, Zongben and Xu, Hong-Kun},
  journal={arXiv:1410.8625},
  year={2014}
}

@article{nodozi2023wasserstein,
  title={Wasserstein consensus {ADMM}},
  author={Nodozi, Iman and Halder, Abhishek},
  journal={arXiv:2309.07351},
  year={2023}
}

@article{jaynes1957information,
  title={Information theory and statistical mechanics},
  author={Jaynes, Edwin T},
  journal={Phys. Rev.},
  volume={106},
  number={4},
  pages={620},
  year={1957},
}

@article{opper2005expectation,
  title={Expectation consistent approximate inference.},
  author={Opper, Manfred and Winther, Ole},
  journal={Journal of Machine Learning Research},
  volume={6},
  number={12},
  year={2005}
}

@article{ma2025bregman,
  title={Bregman {D}ouglas-{R}achford Splitting Method},
  author={Ma, Shiqian and Xiao, Lin and Zhao, Renbo},
  journal={arXiv:2509.08739},
  year={2025}
}

@article{douglas1956numerical,
  title={On the numerical solution of heat conduction problems in two and three space variables},
  author={Douglas, Jim and Rachford, Henry H},
  journal={Transactions of the American Mathematical Society},
  volume={82},
  number={2},
  pages={421--439},
  year={1956},
}

\onecolumn
\appendix

\section{ADMM Derivation for \cref{eq:admmopt1,eq:admmopt2,eq:admmlocalfoo,eq:admmlocalfoo_2,eq:admmglobalfoo,eq:interesting_prop}}
\label{app:addmmopt}

To derive the optimality condition given in \cref{eq:admmopt1}, we note that $\loss_0(\vglobal) = \half\|\vglobal\|^2$. Therefore, taking the derivative of the objective with respect to $\param$ gives us
\[
   \sum_{k=1}^K \nabla \loss_k(\vglobal^*) + \vglobal^* = 0
   \quad \implies \quad 
   \vglobal^* = -\sum_{k=1}^K \nabla \loss_k(\vglobal^*).
\]
A Lagrangian can be formed as follows:
\begin{equation}
    \min_{\vglobal, \, \vlocal_{1:K}} \, \max_{\text{\vv}_{1:K}} \,\,\, \sum_{k=1}^K \left[ \, \ell_k(\vlocal_k) + \vv_k^\top (\vlocal_k \!-\! \vglobal) \, \right] + \loss_0(\vglobal).
    \label{eq:lagrangian}
\end{equation}
For the Lagrangian, the derivative with respect to $\vlocal_k$ yields
\[
   \nabla [\loss_k(\vlocal_k^*) + (\vv_k^*)^\top \vlocal_k^*] = 0
   \quad \implies \quad 
   \vv_k^* = -\nabla \loss_k(\vlocal_k^*),
\]
which is the second condition. Similarly, derivatives with respect to $\vglobal$ gives us
\[
   \nabla \sqr{ \loss_0(\vglobal^*) - \sum_{k=1}^K (\vv_k^*)^\top \vglobal^* } = 0
   \quad \implies \quad 
   \vglobal^* = \sum_{k=1}^K \vv_k^*.
\]
Then defining the sum of $\vv_k^*$ as $\vv_g^*$, we get the third and the fourth condition. Finally, taking the derivative with respect to $\vv_k$, we get the first condition \smash{$\vglobal^* = \vlocal_k^*$}.

We now provide a short derivation of federated ADMM as shown in \Cref{eq:admmlocalfoo,eq:admmlocalfoo_2,eq:admmglobalfoo}. The method is a standard application of ADMM to the consensus problems with regularization, see \citet[Section~7.1.1]{BoPa11}. 
First, we collect from \cref{eq:lagrangian} all the terms depending on $\vlocal_k$, and add a quadratic proximity term to optimize the objective as follows,
\begin{equation}
   \vlocal_k \gets \arg\min_{\vlocal_k} \, \loss_k(\vlocal_k) + \vv_k^\top \vlocal_k + \halfs{\rho} \|\vlocal_k-\vglobal\|^2,
   \label{eq:appclient}
\end{equation}
with $\rho>0$ as the (inverse) step-size. We locally optimize this to update~$\vlocal_k$, which is shown in \cref{eq:admmlocalfoo}. The update of $\vv_k$ follows in a similar fashion,
\begin{equation}
   \vv_k \gets \arg\max_{\text{$\vv_k$}} \, \vv_k^\top (\vglobal - \vlocal_k) + \frac{1}{2\rho} \|\vv_k - \vv_k^\text{old}\|^2
   \quad \implies  \quad 
   \vv_k \gets \vv_k^{\text{old}} + \rho (\vlocal_k - \vglobal).
   \label{eq:appdualstep}
\end{equation}
This is the update given in \Cref{eq:admmlocalfoo_2}. The third line of the algorithm similarly optimizes,
\begin{align*}
   \vglobal &\gets \arg\min_{\vglobal} { \loss_0(\vglobal) - \sum_{k=1}^K \vv_k^\top \vglobal + \frac{\rho}{2} \sum_{k=1}^K\|\vlocal_k - \vglobal\|^2 } 
\end{align*}
For the special case when \smash{$\loss_0(\vglobal) = \half\|\vglobal\|^2$}, the server update only requires an average over $\vparam_k$ and a sum over $\vv_k$, as shown below by taking the fixed point of the above update:
\begin{align}
   &\vglobal - \sum_{k=1}^K \vv_k - \rho \sum_{k=1}^K \vlocal_k + \rho K \vglobal =0 \nonumber\\
   \Leftrightarrow \quad & \vglobal \gets \alpha \sum_{k=1}^K \vv_k + (1-\alpha) \frac{1}{K} \sum_{k=1}^K \vlocal_k, \label{eq:admm_global_simplified}
\end{align}
where we define $\alpha = 1/(\rho K + 1)$.

Finally, to see why the dual variables correspond to gradients as in \cref{eq:interesting_prop}, we consider the optimality condition of the client update in \Cref{eq:appclient}:
\begin{equation}
  \nabla \ell_k(\vparam_k) + \vv_k^\text{old} + \rho (\vparam_k - \vglobal) = 0 \quad \Leftrightarrow \quad \vv_k^\text{old} + \rho (\vparam_k - \vglobal) = -\nabla \ell_k(\vparam_k).
\end{equation}
Plugging this into the dual update on the right in \Cref{eq:appdualstep}, we immediately get $\vv_k = -\nabla \ell_k(\vparam_k)$.

\section{Background on Variational Bayes}
\label{app:pvi_a}

\subsection{Exponential Families}
\label{app:expfam}
\begin{table}[h!]
   \caption{A summary of exponential-families used in the paper showing natural parameters $\vnatparam$, sufficient statistics $\vT(\vparam)$, and expectation parameters $\vmeanparam$, reproduced from \citet[Table~2]{KhRu23}.}
   \begin{center}
      \resizebox{\linewidth}{!}{
      \begin{tabular}{l c c c c c}
         \toprule
         Distribution  & $\vnatparam$ & $\vT(\vparam)$ & $\vmu$ & Resulting Method \\
         \midrule
           $\gauss(\vparam\,|\,\vm, \vI)$    & $\vm$ & $\vparam$ & $\vm$ & ADMM \\
         \midrule
           $\gauss(\vparam\,|\,\vm, \vS^{-1})$, fixed $\vS$  & $\vS\vm$ & $\vparam$ & $\vm$ & \makecell{Preconditioned\\ ADMM}  \\
         \midrule
           $\gauss(\vparam\,|\,\vm, \vS^{-1})$, $\vS = \diag(\vs)$ & $\rnd{ \begin{array}{c} \vs \vm \\ -\half\vs \end{array} }$ & $\rnd{ \begin{array}{c} \vparam \\ \vparam^2 \end{array} }$ & $\rnd{ \begin{array}{c} \vm\\ \vm^2 + 1/\vs \end{array} }$ & \Cref{alg:bayesadmmgaussian} \\
         \midrule
           $\gauss(\vparam\,|\,\vm, \vS^{-1})$  & $\rnd{ \begin{array}{c} \vS \vm \\ -\half\vS \end{array} }$ & $\rnd{ \begin{array}{c} \vparam \\ \vparam\vparam^\top \end{array} }$ & $\rnd{ \begin{array}{c} \vm\\ \vm\vm^\top + \vS^{-1} \end{array} }$ & \Cref{ssec:fc}  \\
         \bottomrule
      \end{tabular} }
   \end{center}
   \label{table:ef}
\end{table}

In this appendix, we provide a short self-contained introduction to exponential families. 
Many commonly used distributions such as Gaussian, categorical or gamma distributions belong to an exponential family. 
Viewing them as such provides us with a rich dual structure which we exploit in this paper. \Cref{table:ef} provides an overview over all exponential families used in this paper.

Here, we give a short introduction and refer the interested reader to \citet[Sec.~3]{WaJo08} for more details and a rigorous treatment.  Similar to~\citet{jaynes1957information}, we introduce exponential families as maximum-entropy solutions of a constrained optimization problem. We assume a base measure which has density $h(\vparam)$ with respect to the Lebesgue measure and then search for a density $q(\vparam)$ which has maximum entropy (relative to $h$) and whose expectation under sufficient
statistics $\vT(\vparam)$ matches given expectations or moments $\vmu$. There are many distributions which share the same moments $\vmu$, but only one has the maximum entropy, see \Cref{fig:expfam_a} for an example with uniform density $h(\vparam) = 1$. The maximum entropy density is given as the solution to the following problem: 
\begin{equation}
    q_{\text{\vmu}} = \argmin_{q} ~ \myKL{q}{h} \, \text{ s.t. } \, \mathbb{E}_q[\vT(\vparam)] = \vmu, \, q(\vparam) \geq 0, \, \int q(\vparam) \mathrm{d} \vparam = 1. 
\end{equation}
Similar as in \citet[Sec.~2]{jaynes1957information}, introducing Lagrange multipliers $\vlambda$ for the moment constraints and another Lagrange multiplier $\alpha$ for the normalization constraint, we get the Lagrangian,  
\begin{equation}
    \mathcal{L}(q, \vlambda, \alpha) = \myKL{q}{h} + \langle \vlambda, \mathbb{E}_q[\vT(\vparam)] - \vmu \rangle + \alpha \left( 1 - \int q(\vparam) \mathrm{d} \vparam \right).
\end{equation}
Taking the first variation of the Lagrangian with respect to $q$ and setting it to zero, we arrive at,
\begin{equation}
    q_{\text{\vmu}}(\vparam) = h(\vparam) \exp(\langle \vlambda, \vT(\vparam) \rangle - \alpha ),
\end{equation}
which for $\alpha = A(\vlambda) = \log \int \exp(\langle \vlambda, \vT(\vparam) \rangle) h(\vparam) \mathrm{d} \vparam$ is the definition of the exponential family as provided in the main text in \Cref{eq:qexp}. We also see that the Lagrange multipliers $\vlambda$ correspond to the natural parameters of the exponential family.

\begin{figure}[t!]
        \centering
    \begin{subfigure}[t]{0.5\linewidth}
        \centering
        \begin{tikzpicture}
          \node[inner sep=0pt] (img) at (0,0) {\includegraphics[height=2.8cm]{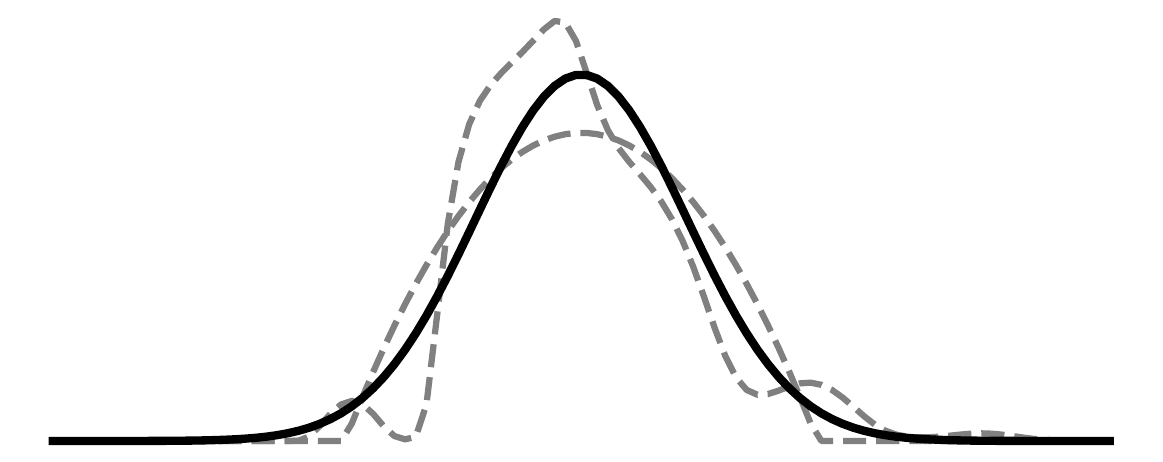}};
        \node at (2.1, 0.5) {$\vmu=(0, 1)$};
    \end{tikzpicture}
        \caption{Maximum Entropy Distribution}
        \label{fig:expfam_a}
    \end{subfigure}
    \hspace{0.05\linewidth}
    \begin{subfigure}[t]{0.38\linewidth}
        \centering
        \includegraphics[height=2.8cm]{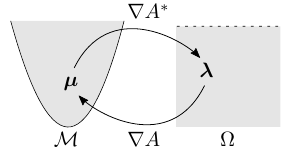}
        \caption{Dual Coordinates}
        \label{fig:expfam_b}
    \end{subfigure}
    \caption{(a) There are many distributions (two possible ones shown in dashed gray lines) which share the same moments $\vmu$. Distributions which have maximum entropy among all possible candidates belong to an exponential family (here: the Gaussian shown in solid black). (b) There is a one-to-one correspondence between the moments $\vmu \in \mathcal{M}$ and the natural parameters $\vlambda \in \Omega$, which are Lagrange multipliers to the moment constraints in the maximum entropy variational problem.}
    \label{fig:expfam}
\end{figure}

 The Lagrangian we used in the main text to derive our Bayesian-ADMM method closely follows this dual structure of the exponential family. The above derivation provides an additional motivation why one should use the natural parameters $\vlambda$ in the dual update of our method in \Cref{eq:bayesian_admm2}. They live in the correct space of Lagrange multipliers to constraints on the expectation parameters.

 As mentioned in the main text, Lagrange multipliers $\vlambda$ and moments $\vmu$ form a dual coordinate system, where $\vmu = \nabla A(\vlambda) = \mathbb{E}_q[\vT(\vparam)]$. The map $\nabla A : \Omega \to \mathcal{M}$ maps from the space of valid natural parameters $\Omega = \{ \vlambda : A(\vlambda) < \infty \}$ to the space of valid moments $\mathcal{M} = \{ \mathbb{E}_q[\vT(\vparam)] : q \in \mathcal{Q} \}$. The inverse is given by $\nabla A^* : \mathcal{M} \to \Omega$, where $A^*$ denotes the convex conjugate of $A$. For an illustration of the two spaces $\Omega$ and $\mathcal{M}$ on the example of Gaussian distributions, see \Cref{fig:expfam_b}.

\subsection{Derivation of the Fixed-Point Equation \cref{eq:natgradview}}
\label{app:natgradview}
We start by writing the natural gradient of the KL divergence as follows,
\[
   \nabla_{\text{\vmu}} \myKL{q}{\pi_0} = \nabla_{\text{\vmu}} \myexpect_q[ \log q - \log \pi_0 ]  = \vlambda - \nabla_{\text{\vmu}} \myexpect_q[\loss_0].
\]
The last equality follows from \citet[Eq. 64]{KhRu23} and also noting that $\pi_0 \propto \exp(-\loss_0)$. Writing the second term in terms of $\Loss_0$, we can use the above to write the gradient of \cref{eq:bayesian_split_problem} as
\begin{equation*}
   \begin{split}
      \sum_{k=0}^K \nabla \Loss_k(\vmu) + \vlambda. 
   \end{split}
\end{equation*}
Setting this to 0 at $\vmu_g^*$, we get the first equation in \cref{eq:natgradview}. Then, substituting the first equation in \cref{eq:qexp} and simplifying, we get the second equation.

\section{Derivations and Details on Bayesian-ADMM}
\subsection{Dual Variables are Natural Gradients}
\label{app:dualarenatgrads}
The result follows from the optimality condition of the client step in \Cref{eq:bayesian_admm1},
\begin{equation}
  \nabla_{\text{\vmu}} \Loss_k(\vmu_k) + \hlambda_k + \rho (\vlambda_k - \blambda) = 0,
\end{equation}
which is directly obtained by taking the derivative in $\vmu$ and setting it to zero. The above uses the fact that $\nabla_{\text{\vmu}} \myKL{q_k}{q_g} = \vlambda_k - \blambda$, see \citet[Eqs.~21,\,23]{KhRu23}.
Inserting this expression into the dual update in \Cref{eq:bayesian_admm2}, we get, 
\begin{equation}
  \hlambda_k^{\text{new}} \gets \hlambda_k + \rho (\vlambda_k - \blambda) = - \nabla_{\text{\vmu}} \Loss_k(\vmu_k).
\end{equation}
Thus, the dual variables in Bayesian-ADMM correspond to the natural gradients of the client losses.

\subsection{Relationship of Bayesian-ADMM to Methods from Literature} 
\label{app:relationship}
Here, we discuss connections of Bayesian-ADMM to various methods from the literature. 

\paragraph{Bregman ADMM and augmented Lagrangian methods.}
Many variants of ADMM have been proposed since its inception. However, we believe that the dual update in Bayesian-ADMM in \Cref{eq:bayesian_admm2}, 
\begin{equation*}
   \hlambda_k \gets \hlambda_k + \rho (\vlambda_k - \blambda),
\end{equation*}
is a fundamental departure from other generalizations of ADMM used in the optimization literature~\citep{WaBa14,wang2014convergence,babagholami2015d,nodozi2023wasserstein}, which use the primal variables in the dual update. In our setting, this would amount to using $\vmu_k - \bmu$ in the dual update, which is unnatural from a Bayesian viewpoint. Arithmetic in $\vlambda$-coordinates corresponds to multiplication and division of distributions, but no such simple interpretation exists in $\vmu$-coordinates.  
Moreover, as we have seen in the previous section of the appendix, the choice of natural parameters in the dual update ensures the natural condition \smash{$\hlambda_k = -\nabla \Loss_k(\vmu_k)$} at every step of the algorithm, which is essential to our proposal. Nonlinear augmented Lagrangian and Bregman Douglas-Rachford splitting approaches~\citep{luque1986nonlinear,EcFe99,bertsekas2014constrained,oikonomidis2023global,ma2025bregman} come with modified but different dual updates. 

\paragraph{Bayesian Learning Rule.}
Bayesian-ADMM is also closely related to the Bayesian learning rule \revision{(BLR)} \citep{KhRu23}, a natural-gradient descent algorithm for VB optimization in the non-federated case. We can show that Bayesian-ADMM mimics the BLR steps when the client updates are iterated until convergence before doing a server update, as stated below.
\begin{restatable}[]{prop}{blr}
  If the client steps 1 and 2 in \Cref{eq:bayesian_admm1,eq:bayesian_admm2} are iterated until convergence for a given $\blambda$, then the server step that follows in \Cref{eq:bayesian_admm3} will be equivalent to the Bayesian learning rule (BLR) of~\citet{KhRu23}.
  \label{prop:blr}
\end{restatable}
\begin{proof}
To show this, we note that if we run the client steps iteratively until convergence, then $\vmu_k = \bmu$ and $\vlambda_k = \blambda$, and the dual vector \smash{$\hlambda_k$} is also equal to the natural gradient at $\bmu$. Then, we write out  the server step of Bayesian-ADMM more explicitly in closed form: 
  \begin{align}
    \vlambda_g = (1 - \alpha) \frac{1}{K} \sum_{k=1}^K \vlambda_k + \alpha \sum_{k=0}^K \hlambda_k.
  \end{align}
 A derivation for this form is given around \Cref{eq:simplified_server}. The average over $\vlambda_k$ is equal to $\blambda$, and the update becomes the BLR as written in~\citet[Eq.~6]{KhRu23} since $\hlambda_k$ are the natural gradients.
\end{proof}

\subsection{Partitioned Variational Inference (PVI)}
\label{app:pvi}
We first rewrite PVI as introduced in~\citet[Alg.~1]{AsBu22}, bringing it into the form used by \citet{SwKh25}. Then, we will discuss differences to Bayesian-ADMM. The first step of~\citet[Alg.~1]{AsBu22} is a local update of the distribution $q_k$ which takes the following form, 
\begin{align}
  q_k &\leftarrow \argmax_{q \in \mathcal{Q}} ~ \int q(\vparam) \log \frac{\qglobal(\vparam) p(\vy_k|\vparam)}{q(\vparam)  t_k(\vparam)} \mathrm{d} \vparam \\
      &= \argmin_{q \in \mathcal{Q}} ~ \mathbb{E}_q[\ell_k(\vparam) + \log t_k(\vparam)] + \myKL{q}{\qglobal}, \label{eq:pvi_primal}
\end{align}
where for the equality step, we switched from maximization to minimization and replaced the negative log-likelihood with a loss $\ell_k$. 

\smash{$t_k(\vparam)$} are site-functions which aim to approximate the likelihood, but they also play a similar rule to Lagrange multipliers in ADMM. 
\citet[Alg.~1]{AsBu22} then update \smash{$t_k$} as follows: 
\begin{align}
  t_k(\vparam) \leftarrow \frac{q_k(\vparam)}{\qglobal(\vparam)} t_k(\vparam) \quad\implies\quad \log t_k(\vparam) \leftarrow \log t_k(\vparam) + (\log q_k(\vparam) - \log \qglobal(\vparam)).
  \label{eq:pvi_dual}
\end{align}

Finally, the server update in \citet[Alg.~1]{AsBu22} is given by,
\begin{align}
  \qglobal^{(t)}(\vparam) &\propto \qglobal^{(t-1)}(\vparam) \prod_{k=1}^K \frac{t_k^{(t)}(\vparam)}{t_k^{(t-1)}(\vparam)} \propto \pi_0(\vparam) \prod_{k=1}^K t_k^{(t)}(\vparam),
  \label{eq:pvi_server}
\end{align}
where the last step holds when $t_k^{(0)}(\vparam) = 1$ and $\qglobal^{(0)}(\vparam) = \pi_0(\vparam)$, which is the initialization condition in \citet[Alg.~1]{AsBu22}. The last step can be seen by recursively applying the definition of $\qglobal$ and noticing that successive terms in the product cancel out in a telescoping fashion. 

We now discuss the difference between Bayesian-ADMM and PVI (\Cref{eq:pvi_primal,eq:pvi_dual,eq:pvi_server}) in more detail. Unlike PVI, Bayesian-ADMM uses the learning rate $\rho$ in all three updates, whereas PVI does not use any learning rates in its raw form \citep{AsBu22}. The learning rate is essential for convergence and it is well-known that some splitting algorithms can diverge if learning-rates are not carefully chosen. To get a method that works well in practice, \citet{SwKh25} used a learning rate $\rho$ in the dual update with a justification of damping (similarly to \citet{AsBu22}) but they did not use it in the update of $q_k$.

The server update of PVI in \Cref{eq:pvi_server} makes the method more similar to the Alternating Minimization Algorithm (AMA) of \citet{Ts91} than ADMM. AMA does not use a proximal-term on the server, which is similar to the PVI update. However, AMA also uses a step-size in the primal and dual updates, whereas PVI only uses a step-size in the dual, which is referred to as a damping parameter. From the convergence proof of AMA, it is clear that step-sizes are required to obtain an overall convergent method, see \citet[Eq.~3.4d]{Ts91}.

Similar to Bayesian-ADMM, PVI with damping can also be connected to the Bayesian learning rule. 
Taking the optimality condition of \Cref{eq:pvi_primal} for an exponential-family $q(\vparam)$ and inserting it into the dual step \Cref{eq:pvi_dual} with damping, we get: 
\begin{equation}
    \hlambda_k \gets (1 - \rho) \hlambda_k + \rho \nabla_{\text{\vmu}} \mathbb{E}_{q_k}[\ell_k]\bigl|_{\text{\vmu} = \text{$\vmu_k$}}, 
\end{equation}
where we used $t_k(\vparam) = \exp(\langle \hlambda_k, \vT(\vparam))$. 
This is a reformulation of the Bayesian learning rule~\citep{KhRu23} in terms of local parameters, see \citet[Eq.~60]{KhRu23}; see also \citet{KhLi17}, except that the natural gradients are evaluated at $\vmu_k$ instead of $\vmu$.

\subsection{Derivation of Bayesian Duality as Stationarity Condition of a Lagrangian}
\label{app:lagrangian}
Here, we show how saddle-points of a Lagrangian recover the Bayesian duality structure of \Cref{fig:bayesduality} (right) and \Cref{eq:bayesduality}. Consider the following, 
\begin{equation}
   \text{Lagrangian}(\vmu_k, \hlambda_k, \bmu) = \sum_{k=1}^K \sqr{ \Loss_k(\vmu_k) \!+\! \langle \hlambda_k, \vmu_k \!-\! \bmu \rangle } + \myKL{\qglobal}{\pi_0}.
    \label{eq:bayes_lagrangian}
 \end{equation}
 At a saddle-point, all the derivatives with respect to the primal- and dual variables vanish. 
Taking the derivative with respect to $\hlambda_k$ and setting it to zero we immediately get $\vmu_k^* = \bmu^*$ from \Cref{eq:bayesduality} due to linearity of the inner product. Taking the derivative with respect to $\vmu_k$ and setting it to zero directly leads to $\hlambda_k^* = -\nabla \Loss_k(\vmu_k^*)$. Taking the gradient with respect to $\bmu$ and setting it to zero, we get 
$$
\blambda^* = \sum_{k=0}^K \hlambda_k^* = -\sum_{k=0}^K \nabla \Loss_k(\bmu^*),
$$ 
which follows as in \Cref{app:natgradview}. The last part in \Cref{eq:bayesduality}, $\bmu^* = \nabla A(\blambda^*)$, simply follows from the EF's dual map. 

\revision{The Lagrangian is a direct extension of that used for \cref{eq:global_problem}, where \smash{$(\vlocal_k, \vv_k, \vglobal)$} are replaced by \smash{$(\vmu_k, {\hlambda}_k,\bmu)$}. One non-trivial change is the KL term, which makes sense given its presence in the VB objective.
The linear terms stem from the constraint $\bmu = \vmu_k$ in the $\vmu$-space. This choice of coordinates is natural and used in other works as well \citep{opper2005expectation,AdCh21}. Exponential families are also derived using such constraints, see \Cref{app:expfam}.
}

\section{Derivation of the ADMM Extensions in \Cref{ssec:derive,ssec:extend}}

\subsection{Recovering Federated ADMM}
\label{app:recover_derivation}

As mentioned in \Cref{ssec:derive}, we recover federated ADMM when using Gaussians with fixed covariance. 
These form an exponential family as described in~\Cref{eq:fixef}, see also \Cref{table:ef}.
 From~\Cref{eq:fixef}, we see that the natural gradients are simply an expectation of the regular gradients. Since $\vT(\vparam) = \vparam$, and the natural gradients are just gradients as in regular ADMM, the dual variable $\hlambda_k$ has the same size as in ADMM, that is, we set \smash{$\hlambda_k = \vv_k$}. As dual variables are the natural gradients, we also get $\vv_k = -\mathbb{E}_{q_k}[\nabla \ell_k(\vparam)]$.
 
The KL divergence between two Gaussians with fixed covariance is simply the squared Euclidean norm (up to a constant), and with  $\vmu = \vm$, the client update in Bayesian-ADMM in \Cref{eq:bayesian_admm1} becomes 
\begin{align}
    \vm_k &\leftarrow 
     \argmin_{\text{$\vm_k$}} \, \mathbb{E}_{\text{$\gauss(\vparam | \vm_k, \vI)$}}[\ell_k(\vparam)] + \vv_k^\top \vm_k + \frac{\rho}{2} \| \vm_k - \vm_g \|^2. \label{eq:BayesFixGaussADMM1b}
\end{align}
Using the definition of the expectation parameter and $\vT(\vparam)=\vparam$, we can write $\vv_k^\top \vm_k = \mathbb{E}_{q_k}[\vv_k^\top \vparam]$ and using linearity of the expectation, \Cref{eq:BayesFixGaussADMM1b} then recovers~\Cref{eq:random} from the main text. 
The dual update of ADMM is recovered from Bayesian-ADMM by plugging in \smash{$\hlambda_k = \vv_k$}, $\vlambda_k = \vm_k$, $\blambda = \vm_g$.

The same arguments as for \Cref{eq:BayesFixGaussADMM1b} allow us to write Bayesian-ADMM server step as, 
\begin{align}
    \vm_g &\leftarrow 
     \argmin_{\text{$\vm_g$}} \, \mathbb{E}_{\text{$\gauss(\vparam | \vm_g, \vI)$}}[\ell_0(\vparam)] - \sum_{k=1}^K \vv_k^\top \vm_g + \sum_{k=1}^K \frac{\rho}{2} \| \vm_g - \vm_k \|^2. \label{eq:BayesFixGaussADMM1c}
\end{align}
Using the delta method, we see that this recovers the server update of regular ADMM.

\subsection{Derivation of Newton-like ADMM via Full Covariances}
\label{ssec:fc}
To derive the Newton-like method, we use the following forms for the posterior distributions
\begin{align}
    \qglobal(\vparam) = \mathcal{N}(\vparam \,|\, \vm_g, \vS_g^{-1}), \quad q_k(\vparam) = \mathcal{N}(\vparam \,|\, \vm_k, \vS_k^{-1}).
\end{align}
These fit our setup (see \Cref{table:ef}) through, 
\begin{align}
   &\vlambda = (\vS\vm, -\half \vS), \qquad \vmu = (\vm, \vm \vm^\top + \vS^{-1}), \qquad \vT(\vparam) =  (\vparam, \vparam \vparam^\top).
   \label{eq:gausseffull}
\end{align}
Following \citet[Eq.10-11]{KhRu23}, the natural gradients for Gaussians with full covariance are 
\begin{align}
 &\nabla_{\text{$\vmu$}} \myexpect_{q_k}[\loss_k] = \left(\vg_k - \vH_k \vm_k, \half \vH_k \right), \qquad    \hlambda_k = (\vv_k, -\half \vV_k),
   \label{eq:gausseffulldual}
\end{align}
where $\vH_k = \mathbb{E}_{q_k}[\nabla^2 \ell_k]$ is now the full expected Hessian which determines the form of the second dual variable to be a matrix. Since $\hlambda_k$ is the negative natural gradient, we get that $\vV_k = \vH_k$ to be the Hessian. To rewrite the client problem in \Cref{eq:bayesian_admm1}, we use linearity of expectation and definition of the expectation parameter to get $\langle \hlambda_k, \vmu_k \rangle = \mathbb{E}_{q_k}[\langle \hlambda_k, \vT(\vparam) \rangle]$ and expand $\langle \hlambda_k, \vT(\vparam) \rangle = \langle (\vv_k, -\half \vV_k), (\vparam, \vparam \vparam^\top) \rangle = \vv_k^\top \vparam - \half \vparam^\top \vV_k \vparam$ to arrive at:
\begin{align}
   \vm_k, \vS_k &\gets \argmin_{\text{$\vm_k$}, \text{$\vS_k$}} \, \mathbb{E}_{q_k}[\ell_k(\vparam) + \vv_k^\top \vparam - \half \vparam^\top \vV_k \vparam] + \rho \myKL{q_k}{q_g}, \label{eq:BayesFlexGaussADMM1} 
\end{align}
where the KL-divergence is given as,
\begin{equation}
    \myKL{q_k}{q_g} = \half \left( \text{tr}(\vS_k^{-1} \vS_g) + \log \det(\vS_k) + \| \vm_k - \vm_g \|_{\text{$\vS_g$}}^2\right) + \text{const.}
\end{equation}
Substituting the form of $\veta_k$ and $\vlambda_k$ as well as $\vlambda_g$, the dual update in Bayesian-ADMM (\Cref{eq:bayesian_admm2}) is,
\begin{align}
   \vv_k &\gets \vv_k + \rho (\vS_k \vm_k - \vS_g \vm_g), \\
  \vV_k &\gets \vV_k + \rho (\vS_k - \vS_g). \label{eq:BayesFlexGaussADMM2} 
  \end{align}
  Finally, to write the server update of Bayesian-ADMM (\Cref{eq:bayesian_admm3}), we first notice that it has a simple closed form in natural parameters. To see this, we take the derivative in $\vmu_g$ and set it to zero to get, 
\begin{align}
    \vlambda_g - \hlambda_0 - \sum_{k=1}^K \hlambda_k + \rho \sum_{k=1}^K \left(\vlambda_g - \vlambda_k\right) = 0,
  \end{align}
  where $\hlambda_0 = \nabla_{\text{$\vmu$}} \Loss_0(\vmu_g)$.
  Rearranging and denoting $\alpha = 1 / (1 + \rho K)$, this is equivalently written as
\begin{align}
  \vlambda_g = (1 - \alpha) \frac{1}{K} \sum_{k=1}^K \vlambda_k + \alpha \sum_{k=0}^K \hlambda_k.
  \label{eq:simplified_server}
  \end{align}
  Using the form of $\vlambda$ from \Cref{eq:gausseffull}, the form of $\hlambda_k$ from \Cref{eq:gausseffulldual} and a simple isotropic Gaussian prior with precision $\delta$, this can be simplified as:
  \begin{align}
    \vS_g &\gets (1-\alpha) \, \ttmean \left(\vS_{1:K} \right) + \alpha \left[ \delta \vI + \ttsum(\vV_{1:K}) \right],\\ 
   \vm_g &\gets \vS_g^{-1} \left[  (1 - \alpha) \ttmean(\vS_{1:K}\vm_{1:K}) + \alpha \ttsum(\vv_{1:K}) \right].
   \label{eq:BayesFlexGaussADMM3} 
\end{align}
We use this full Newton-like method for the experiments in \Cref{fig:pvi,fig:singleround,fig:fig1} where the variational subproblem in \Cref{eq:BayesFlexGaussADMM1} is solved using VON~\citep[Eq.~12]{KhRu23}.

\paragraph{Convergence in a single communication round.} Similar to message passing algorithms~\citep{winn2005variational,koller2009probabilistic}, Bayesian-ADMM can converge in a single round of communication, which is unlike regular ADMM and BregmanADMM~\citep{WaBa14}. 
\begin{restatable}[]{prop}{twostep}
    If $\ell_k(\vparam) = -\langle \vc_k, \vT(\vparam) \rangle$ (for example, $\ell_k(\vparam) = \half \vparam^\top \vA \vparam + \vb^\top \vparam$ in the case of full Gaussian distributions), step-size $\rho = 1/K$ and initializing the server at the prior \smash{$\blambda = \hlambda_0$}, then Bayesian-ADMM (\Cref{eq:bayesian_admm1,eq:bayesian_admm2,eq:bayesian_admm3}) converges to the solution of \Cref{eq:bayesian_split_problem} after one communication round.
    \label{prop:onestep}
\end{restatable}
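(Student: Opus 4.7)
I first derive the closed form of the optimum $\bar\vlambda^\ast$ from the stationarity condition \Cref{eq:natgradview}. Since $\ell_k(\vparam) = -\langle \vt_k, \vT(\vparam)\rangle$, the expected loss is linear in $\vmu$, namely $\mathbb{E}_q[\ell_k] = -\langle \vt_k, \vmu\rangle$, so $\nabla_{\vmu}\mathbb{E}_{\bar q^\ast}[\ell_k] = -\vt_k$ regardless of where the expectation is taken. Identifying the prior's natural parameter as $\hlambda_0 = \vt_0$ (so $\pi_0 \propto \exp\langle \vt_0, \vT(\vparam)\rangle$), \Cref{eq:natgradview} yields $\bar\vlambda^\ast = \sum_{k=0}^K \vt_k$.

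Next I execute one pass of BayesADMM. In line~1 (\Cref{eq:bayesian_admm1}), the first-order condition in $\vmu_k$ combined with the Bregman identity $\nabla_{\vmu_k}\myKL{q_k}{\bar q} = \vlambda_k - \bar\vlambda$ gives the closed form
\begin{equation*}
  \vlambda_k \;=\; \bar\vlambda \;+\; \tfrac{1}{\rho}\bigl(\vt_k - \hlambda_k\bigr).
\end{equation*}
Plugging this into the dual step (\Cref{eq:bayesian_admm2}) collapses the per-client state to $\hlambda_k \leftarrow \vt_k$, \emph{independently} of $\rho$ and of the prior value of $\hlambda_k$. This is the key simplification: linearity of $\mathbb{E}_q[\ell_k]$ in $\vmu$ makes each dual variable lock onto its data $\vt_k$ after exactly one client iteration.

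Finally I substitute into the server update (line~3 of \Cref{fig:algs_b}). Choosing $\rho = 1/K$ gives $\alpha = 1/(1+\rho K) = 1/2$. Using the stated initialization $\bar\vlambda^{(0)} = \hlambda_0 = \vt_0$ together with the natural zero initialization $\hlambda_k^{(0)} = 0$ for $k\ge 1$ (matching \Cref{alg:bayesadmmgaussian}), line~1 gives $\vlambda_k = \vt_0 + K\vt_k$, hence $\ttmean(\vlambda_{1:K}) = \vt_0 + \sum_{k=1}^K \vt_k = \ttsum(\hlambda_{0:K})$. The two aggregates coincide, so the half--half average returns $\bar\vlambda = \vt_0 + \sum_{k=1}^K \vt_k = \bar\vlambda^\ast$, and a single communication round suffices.

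The main obstacle I anticipate is the bookkeeping around initialization: the proposition pins down only $\bar\vlambda^{(0)}$, whereas the line-1 output $\vlambda_k$ still depends on the initial client dual $\hlambda_k^{(0)}$. The zero initialization (as used in \Cref{alg:bayesadmmgaussian}) is precisely what forces $\ttmean(\vlambda_{1:K})$ and $\ttsum(\hlambda_{0:K})$ to agree, and $\rho = 1/K$ calibrates $\alpha = 1/2$ so that the prior contribution $\vt_0$, appearing in both halves, is not double-counted. Once these conventions are fixed, the rest is direct substitution.
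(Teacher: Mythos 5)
Your proof is correct and follows essentially the same route as the paper's: the client optimality condition gives $\vlambda_k = \bar\vlambda + \tfrac{1}{\rho}(\vt_k - \hlambda_k)$, the dual step locks $\hlambda_k$ onto $\vt_k$, and with $\rho = 1/K$ (and $\hlambda_k^{(0)}=0$) the two server aggregates coincide at $\hlambda_0 + \sum_{k} \vt_k$, which is the stationary point of \Cref{eq:bayesian_split_problem}. The paper adds only a closing observation that on the next client step the multiplier term cancels the loss so that $\vlambda_k = \bar\vlambda$, confirming the iterates remain at the optimum --- your general formula for $\vlambda_k$ yields this immediately, so nothing essential is missing (though note the role of $\rho=1/K$ is to equate the two aggregates, after which the particular value $\alpha=1/2$ is immaterial).
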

\begin{proof}
    From the optimality condition of \Cref{eq:bayesian_admm1}, we get that $\vlambda_k = \blambda + \frac{1}{\rho} \vc_k$ as we initialize $\hlambda_k = 0$.
    The server update in Bayesian-ADMM takes a simpler form, as derived in \Cref{eq:simplified_server},
    \begin{align}
      \vlambda_g = (1 - \alpha) \frac{1}{K} \sum_{k=1}^K \vlambda_k + \alpha \sum_{k=0}^K \hlambda_k.
    \end{align}
    Noticing that \smash{$\hlambda_k = \vc_k$} after the dual update, the server update is:
    \begin{align}
        \blambda &= (1-\alpha) \frac{1}{K} \sum_{k=1}^K \left( \hlambda_0 + \frac{1}{\rho} \vc_k \right) + \alpha \rnd{ \hlambda_0 + \sum_{k=1}^K \vc_k} 
        = \hlambda_0 + \sum_{k=1}^K \vc_k,
    \end{align}
    where in the second equality we used $\rho = 1/K$. This is the optimality condition of \cref{eq:bayesian_split_problem} for a conjugate prior $\hlambda_0$. 
    In the second client step, the Lagrange multiplier term cancels out with the loss (since $\hlambda_k=\vc_k$ and $\ell_k(\vparam) = -\vc_k^\top \vT(\vparam)$), and minimizing the KL leads to $\vlambda_k = \blambda$, and therefore all clients and the server reach the optimal solution after one communication round.
\end{proof}

\subsection{Derivation of Adam-like \Cref{alg:bayesadmmgaussian} (IVON-ADMM)}
\label{app:ivonderivation}
To derive the Adam-like method shown in \Cref{alg:bayesadmmgaussian}, we use a Gaussian posterior approximation with diagonal covariance. 
Denoting $q(\vparam) = \gauss(\vparam\,|\,\vm, \diag(\vs)^{-1})$ where $\vs$ is the precision vector, similarly to the previous section, we get the following setup: 
\begin{align}
   \vlambda &= (\vs\vm, -\half \vs), \qquad  \vmu = (\vm, \vm^2 + 1/\vs),   \qquad
   \vT(\vparam) =  (\vparam, \vparam^2).\label{eq:gaussef} 
\end{align}
For the natural-gradient we get:
\begin{align}
\nabla_{\text{$\vmu$}} \myexpect_{q_k}[\loss_k] = \left( \vg_k - \vh_k \vm_k ,\, \half \vh_k \right).
\end{align}
Here, all operations (such as $\vs\vm$) are element-wise. The last equation is from \citet[Eq. 10-11]{KhRu23} where we denote $\vg_k = \myexpect_{q_k} [\nabla \loss_k]$ and $\vh_k = \myexpect_{q_k} [\diag(\nabla^2 \loss_k)]$.

Substituting the definitions of a diagonal Gaussian as an EF (see \Cref{table:ef} and \Cref{eq:gaussef}) into the Bayesian-ADMM updates is a straightforward mechanical calculation similar to the previous section. We denote the dual variables as \smash{$\hlambda_k = (\vv_k, -\half \vu_k)$} to match the structure of the natural parameter $\vlambda_k = (\vs_k \vm_k, -\half \vs_k)$ which is also a Lagrange multiplier (see \Cref{app:expfam}). Unlike $\vlambda_k$ which has to be a valid natural parameter, \smash{$\hlambda_k$} can be arbitrary (and also zero), so we do not use $\vu_k \vv_k$, but rather just $\vv_k$ for the first argument.

We expand the linear term in Bayesian-ADMM to get, 
\begin{equation}
\langle \hlambda_k, \vT(\vparam) \rangle = \langle (\vv_k, \vu_k), (\vparam, \vparam^2) \rangle = \vv_k^\top \vparam - \half \vu_k^\top (\vparam)^2 = \vv_k^\top \vparam - \half \vparam^\top \text{diag}(\vu_k) \vparam.
\end{equation}
Inserting this into \Cref{eq:bayesian_admm1}, we arrive at:
\begin{align}
   \vm_k, \vs_k &\gets \argmin_{\text{$\vm_k$}, \text{$\vs_k$}} \, \mathbb{E}_{q_k}[\ell_k(\vparam) + \vv_k^\top \vparam - \half \vparam^\top \text{diag}(\vu_k) \vparam] + \rho \myKL{q_k}{\qglobal} \label{eq:BayesFlexGaussDiagADMM1}, 
\end{align}
where the KL-divergence is given by: 
\begin{align}
   \myKL{q_k}{\qglobal} = \half \left( \sum_{i=1}^P \left[\frac{\vs_g^i}{\vs_k^i} + \log \vs_i\right] + \| \vm_k - \vm_g \|_{\text{$\vs_g$}}^2\right) + \text{const.}
\end{align}
 \Cref{alg:bayesadmmgaussian} uses IVON~\citep{shen2024variational} to minimize \Cref{eq:BayesFlexGaussDiagADMM1}. We slightly generalize IVON to handle a general prior and the two Lagrange multipliers. The resulting method is shown in \Cref{alg:ivon} with modifications over \citet{shen2024variational} highlighted in red. 
 
 The IVON method in \Cref{alg:ivon} minimizes the following objective function, 
\begin{equation}
  \lambda \mathbb{E}_q \left[ \ell(\vparam) \!+\! \vv^\top \vparam \!-\! \half \vparam^\top \text{diag}(\vu) \vparam \right] \!+\! \myKL{q}{p},
  \label{eq:ivon_template}
\end{equation}
where $\ell(\vparam)$ is a generic loss and $\hat \nabla \ell(\vparam)$ denotes a stochastic gradient. The objective \Cref{eq:ivon_template} matches the subproblem in \Cref{eq:BayesFlexGaussDiagADMM1}. We provide details on how to use \Cref{alg:ivon} in the following \Cref{app:ivondetails}.

    \definecolor{commentcolor}{RGB}{128, 179, 89}
    \renewcommand\algorithmiccomment[1]{\hfill{\textcolor{commentcolor}{\eqparbox{COMMENT}{#1}}}}

    \begin{algorithm}[t!]
     \caption{IVON for minimizing \Cref{eq:ivon_template}. Hyperparameters $h_0$, $\alpha_t$, $\beta_1$, $\beta_2$ are chosen following \citet[Appendix~A]{shen2024variational}. We highlight in red the extra terms due to the Lagrange multipliers.}
      \label{alg:ivon}
      \begin{algorithmic}[1]
          \setstretch{1}
          \renewcommand{\algorithmicrequire}{\textbf{Inputs:}}
          \REQUIRE Loss $\ell$, prior $p(\vparam) = \gauss(\vparam \,|\, \vm_p, \vsigma_p^2)$, loss scaling $\lambda$, Lagrange multipliers $\vv$, $\vu$.
        \renewcommand{\algorithmicrequire}{\textbf{Initialization:}}
          \REQUIRE $\vm \leftarrow \vm_{p}$, $\vh \leftarrow h_0$,
                        $\vg \leftarrow 0$, $\vdelta \leftarrow 1 / (\lambda \vsigma_{p}^2)$
                        \FOR{$t=1,2,\hdots$}
                        \STATE \hspace{-0.15cm}$\widehat \vg \leftarrow
                        {\widehat \nabla} \ell(\vparam)$,
        \text{\textcolor{black}{ where} } 
        $\vparam \sim \gauss(\vm, \vsigma^2)$
        \STATE \hspace{-0.15cm}$\widehat \vh \leftarrow { \widehat \vg\cdot (\vparam-\vm) / \vsigma^2} - {\color{red}\vu}$
          \STATE \hspace{-0.15cm}$\vg \leftarrow \beta_1 \vg + (1 - \beta_1) \widehat \vg$ 
                  \STATE \hspace{-0.15cm}$\vh \leftarrow \beta_2 \vh+(1
                  - \beta_2)\widehat \vh + \half (1 - \beta_2)^2
                    (\vh - \widehat \vh)^2 / (\vh + \vdelta)$
        \STATE \hspace{-0.15cm}$\vm \leftarrow \vm - \alpha_t(\vg + {\color{red} \vv - \vu \vm} + \vdelta (\vm - \vm_{p})) / (\vh + \vdelta)$
          \STATE \hspace{-0.15cm}$\vsigma \leftarrow 1 / \sqrt{\lambda (\vh + \vdelta)}$
        \ENDFOR
          \STATE \textbf{return} $(\vm, 1 / \vsigma^2)$
      \end{algorithmic}
      \setstretch{1}
    \end{algorithm}

\subsection{Details on using IVON (\Cref{alg:ivon}) to Implement IVON-ADMM (\Cref{alg:bayesadmmgaussian})} 
\label{app:ivondetails}
Here, we provide additional details on how the IVON step in \Cref{alg:bayesadmmgaussian} is implemented in our experiments. As commonly done, we consider a tempered version of the variational-Bayesian problem in \Cref{eq:bayesian_split_problem}, 
\begin{equation}
    \qglobal^* = \underset{\qglobal \in \mathcal{Q}}{\argmin} ~ \sum_{k=1}^K \mathbb{E}_{\qglobal}[\ell_k(\vparam)] + \tau \myKL{\qglobal}{\pi_0}, 
     \label{eq:bayesian_split_problem_tau}
 \end{equation}
 where $\tau > 0$ is a temperature parameter. Dividing the whole objective by $\tau$ does not change the minimizer and gives rescaled client losses $\ell_k(\vparam) / \tau$ on the original objective \Cref{eq:bayesian_split_problem}.

 For the tempered problem, the local client optimisation problem in IVON-ADMM reads as follows,
 \begin{equation}
     \argmin_{q_k} \, \mathbb{E}_{q_k} \left[ \sum_{i=1}^{N_k} \ell_{k}^{(i)}(\vparam) / \tau + \vv_k^\top \vparam - \half \vparam^\top \text{diag}( \vu_k) \vparam  \right] + \smash{\rho \myKL{q_k}{\qglobal}}, 
\end{equation}
which when dividing by $\rho$ corresponds to the loss in \Cref{alg:bayesadmmgaussian}, where we set $\tau=1$. $\ell_{k}^{(i)}$ denote the per-example loss functions and $N_k$ is the number of data examples on the client $k$.
We now bring this problem into a form that allows us to directly apply~\Cref{alg:ivon}. Dividing by $\rho$ we get: 
\begin{equation}
    \argmin_{q_k} \, \frac{N_k}{\rho \tau} \mathbb{E}_{q_k} \left[ \frac{1}{N_k} \sum_{i=1}^{N_k} \ell_{k}^{(i)}(\vparam) + \frac{\tau}{N_k} \vv_k^\top \vparam - \frac{\tau}{N_k} \half \vparam^\top \text{diag}(\vu_k)  \vparam \right] + \smash{\myKL{q_k}{\qglobal}}. 
\end{equation}
Matching the forms of the above equation with \Cref{eq:ivon_template} gives us $\lambda = N_k / (\rho \tau)$, $\vv = (\tau / N_k) \vv_k$, $\vu = (\tau / N_k) \vu_k$. For each client subproblem in line 4 in \Cref{alg:bayesadmmgaussian} we are calling \Cref{alg:ivon} as a subroutine with the following inputs: $\ell_k$, $(\vm_g, \vsigma_g^2)$, $\frac{N_k}{\rho \tau}$, $\frac{\tau}{N_k} \vv_k$, and $\frac{\tau}{N_k} \vu_k$. \Cref{alg:ivon} then returns ($\vm_k, \vs_k$) which is a minimizer of the client variational objective in Bayesian-ADMM.

\section{Experimental Details for the Illustrative Examples}
\label{app:expdetails}

\subsection{Toy Example in \Cref{fig:fig1}}
\label{app:fig1}
For the toy example in \Cref{fig:fig1} we run the standard federated ADMM algorithm (with tuned quadratic regularizer $\frac{\delta}{2} \| \vparam \|^2$) and Bayesian-ADMM with full covariances from~\Cref{ssec:fc}. The dataset consists of the shown points, where we append a single dimension to the features to have a bias present in the linear classifier. We use a binary cross entropy loss (logistic regression) and train with $\delta=0.2$. The step-sizes are set to $\rho = 0.2$ for both methods. 

\subsection{One-Step Convergence on Ridge Regression in \Cref{fig:singleround}}
\label{app:ridgeregression}
The one step convergence plot in \Cref{fig:singleround} uses the same data as in \Cref{app:logreg}, but we consider a linear regression loss $\ell_k(\vparam) = \half \| \vX \vparam - \vy \|^2$ and regularizer $\ell_0(\vparam) = \frac{\delta}{2}\| \vparam \|^2$. Note that this fits the assumption of the one-step convergence result in \Cref{prop:onestep} for sufficient statistics \smash{$\vT(\vparam) = (\vparam, \vparam \vparam^\top)$}. We again compare the standard ADMM method to our Bayesian-ADMM with full Gaussian covariances as well as a direct application of BregmanADMM~\citep{WaBa14} which uses $\vmu$-coordinates in the dual update of \smash{$\hlambda_k$}. 

\subsection{Logistic Regression Convergence Plot in \Cref{fig:pvi}}
\label{app:logreg}
The setup is a multiclass Bayesian logistic regression problem on MNIST. We have $10$ classes, split onto $K=5$ clients where the data is split up in a heterogeneous way: [(0,1), (2,3), (4,5), (6,7), (8,9)].
We run PVI, PVI with damping~\citep{SwKh25, AsBu22}, BregmanADMM~\citep{WaBa14} and Bayesian-ADMM for full Gaussian posteriors. BregmanADMM is implemented similar to Bayesian-ADMM but using the $\vmu$-coordinates in the dual update. BregmanADMM is applicable to our setting, since the KL-divergence between two exponential families is a Bregman divergence. The client subproblem is solved by running the Variational Online Newton method~\citep[Eq.~12]{KhRu23} until convergence. PVI with damping uses $\rho=1/K$ on the dual update (larger $\rho$ did not converge, smaller were slower), whereas for our method we set $\rho=1$. 

\section{Hyperparameters, Architectures and Datasets}
\label{app:hyper}
We provide further details about the hyperparameters, model architectures and datasets used in the experiments in \Cref{tab:deeplearning10,tab:deeplearning100}.

For the hyperparameters in IVON-ADMM we perform a coarse grid search over the step-sizes $\rho$ and $\gamma$ in \Cref{alg:bayesadmmgaussian}, the prior precision $\delta$ and the temperature $\tau$. Compared to FedProx, there are only two additional hyperparameters (the dual step-size and the temperature). We found the choice $0.1$ to work well for both and we did not further tune them.
The subproblem is solved using the IVON optimizer outlined in~\Cref{alg:ivon} whose hyperparameters we set according to the recommendations given in \citet[Appendix~A]{shen2024variational}.

For all baseline methods, we follow the hyperparameter tuning procedure from \citet{SwKh25}. 
We also ensure that the random dataset splits for our experiments match those from the reported results in \citet{SwKh25}. 

\textbf{Details on heterogeneous splits.} 
We follow the heterogeneous sampling procedure from \citet[][App E1]{SwKh25} to generate heterogeneous splits of MNIST, FashionMNIST and CIFAR-10. 
When there are 10 clients, 90\% of all data is usually within 6 clients, with 2 clients having 50\%. Within each client, usually 60-95\% of client data belongs to just 4 classes.

\subsection{MNIST}
We train on the full MNIST dataset of 60000 examples. All methods use batch-size 32. The model is a fully connected neural network with sigmoid activation functions and two hidden layers with $200$ and $100$ neurons, respectively. 

Results for baselines are taken from \citet{SwKh25} as our setup is identical to theirs. 

For the highly-heterogeneous 100-client setting, we use the same split as in \citet{mcmahan2016fedavg}, where each client has data from 2 classes only (300 examples from each class in each client). 
This is a particularly difficult setting as each client only has data from 2 classes. 
We perform hyperparameter sweeps over baselines following \citet[][App E4]{SwKh25}, except we also allow each hyperparameter to be one order of magnitude smaller (as there are more clients). 

\subsection{FashionMNIST}
For the $10$ client splits, we train on $10\%$ of the FMNIST dataset. The $100$ client split uses the full dataset. All methods use batch-size 32, and the model is again the same fully connected neural network as for MNIST. 

Results for baselines are taken from \citet{SwKh25} as our setup is identical to theirs. 

\subsection{CIFAR-10}
We train on the full CIFAR-10 dataset. We consider two models, one is a convolutional network used in \citet{SwKh25} (from \citet{zenke2017continual}) which uses dropout and the other model is the convolutional network used in \citet{acar2021feddyn} which does not use dropout and has less convolutional but more fully connected layers. All methods use batch-size~64. 

Results for baselines for the first CNN are taken from \citet{SwKh25} as our setup is identical to theirs. 
For the second CNN, we rerun hyperparameter sweeps similar settings to \citet[][App E6]{SwKh25}. 
In addition to those parameters, we also sweep over Adam learning rate ($10^{-3}$ or $10^{-4}$).
We also try SGD with learning rate $0.1$ and sweep over learning rate decay ($0.992$ or $1$), so that we follow the hyperparameter sweep settings from \citet{acar2021feddyn}. 

\subsection{CIFAR-100}
We train on the full CIFAR-100 dataset, split across 10 clients using the same Dirichlet parameters. We use a ResNet-20 model~\citep{HeZh16} which has around 250k parameters. All methods use batch-size~64. No data augmentation is used.

We use the same hyperparameter sweeps as in \citet{SwKh25} for CIFAR-10. We note that FedLap-Cov is now prohibitively slow, since computing the diagonal Laplace approximation estimates of the covariance every communication round takes a long time, and so we do not provide results for it.

\section{Additional Results}
\label{app:addres}
The additional results shown in \Cref{tab:deeplearning10_app} confirm the findings from the main paper. IVON-ADMM gets the highest accuracy and lowest test-loss across communication rounds, improving significantly also over the recent previous state-of-the-art Bayesian federated learning FedLap-Cov which uses more expensive Laplace approximations.

\begin{table*}[t!]
    \caption{Additional results showing test accuracy and test NLL for 10, 25 and 50 rounds, with mean and standard deviations over 3 runs. IVON-ADMM significantly outperforms all baselines.
    Averaging over the posterior in IVON-ADMM (as opposed to IVON-ADMM@$\vm$) often improves performance. 
    }
    \centering
    \resizebox{0.99\linewidth}{!}{
    \begin{tabular}{llcccccc}
    \toprule
     & & \multicolumn{3}{c}{\textbf{Test accuracy ($\uparrow$ larger is better)}} & \multicolumn{3}{c}{\textbf{Test NLL ($\downarrow$ smaller is better)}} \\
    \textbf{Scenario} & \textbf{Method} & 10 rounds & 25 rounds & 50 rounds & 10 rounds & 25 rounds & 50 rounds\\
    \midrule
    \multirow{6}{*}{\parbox{2.5cm}{\vspace{-1cm}MLP, 10 clients\\ heterog. 10\% \\ FMNIST}}
    & FedAvg/FedProx                               & 69.9$\pm$0.4 & 74.7$\pm$0.6 & 76.9$\pm$0.9 & 0.80$\pm$0.04 & 0.71$\pm$0.03 & 0.66$\pm$0.03 \\
    & FedDyn                                & 73.0$\pm$0.6 & 74.6$\pm$0.4 & 74.6$\pm$0.5 & 0.75$\pm$0.04 & 0.70$\pm$0.03 & 0.77$\pm$0.05 \\
    & FedLap                                & 71.3$\pm$0.9 & 74.3$\pm$0.4 & 77.6$\pm$0.7 & 0.75$\pm$0.04 & 0.71$\pm$0.06 & 0.65$\pm$0.05 \\
    & FedLap-Cov                            & 74.6$\pm$0.7 & 78.3$\pm$1.0 & 80.5$\pm$0.6 & 0.70$\pm$0.04 & 0.63$\pm$0.04 & 0.60$\pm$0.04 \\
    \rowcolor[gray]{0.96} & IVON-ADMM@$\vm_g$ & {\bf 77.0}$\pm$0.8 & {\bf 81.4}$\pm$0.4 & {\bf 82.1}$\pm$0.1 & {\bf 0.65}$\pm$0.02 & {\bf 0.53}$\pm$0.01 & {\bf 0.51}$\pm$0.00 \\
    \rowcolor[gray]{0.96} & IVON-ADMM & {\bf 77.0}$\pm$0.8 & {\bf 81.5}$\pm$0.5 & {\bf 82.3}$\pm$0.2 & {\bf 0.65}$\pm$0.02 & {\bf 0.52}$\pm$0.01 & {\bf 0.50}$\pm$0.00 \\
   \bottomrule
    \end{tabular}}
    \label{tab:deeplearning10_app}
    \end{table*}

\revision{
    We also perform various ablation studies in Bayesian-ADMM and IVON-ADMM.
}

\revision{ 

\subsection{Ablation: PVI with IVON}
\label{app:abl_pvi}
Here, we compare IVON-ADMM to PVI with damping~\citep{AsBu22} implemented with IVON~\citep{shen2024variational}. The resulting method, which we call IVON-PVI is a special case of IVON-ADMM, where the client step size is set to $\rho=1$ and one uses $\alpha=1$ in the server update. We use a finely tuned dual damping, as there are no other step-size hyperparameters. We run on ResNet-20 on CIFAR-100, as in the main paper. The results are summarized in the following \Cref{tab:ivon_pvi}.
\begin{table*}[t!]
    \centering
    \caption{Test accuracy and NLL after 25, 50 and 100 rounds, with mean and standard deviations over 3 runs. 
    IVON-ADMM has faster convergence in early rounds than IVON-PVI, but they reach overall comparable accuracies after 100 rounds.} 
    \resizebox{\linewidth}{!}{
    \begin{tabular}{llcccccc}
    \toprule
     & & \multicolumn{3}{c}{\textbf{Test accuracy ($\uparrow$ larger is better)}} & \multicolumn{3}{c}{\textbf{Test NLL ($\downarrow$ smaller is better)}} \\
    \textbf{Scenario} & \textbf{Method} & 25 rounds & 50 rounds & 100 rounds & 25 rounds & 50 rounds & 100 rounds\\
    \midrule
    \multirow{6}{*}{\parbox{2.9cm}{\vspace{-1cm}ResNet-20,  \\ 10 clients, \\ heterog. CIFAR-100}}  
    & IVON-PVI@$\vm_g$ & 31.5$\pm$0.5 & 41.2$\pm$0.6 & \bf 47.4$\pm$0.5 & 2.7$\pm$0.0 & 2.2$\pm$0.0 & \bf 2.0$\pm$0.0 \\
     & IVON-PVI & 31.5$\pm$0.5 & 41.2$\pm$0.6 & \bf 47.5$\pm$0.5 & 2.7$\pm$0.0 & 2.2$\pm$0.0 & \bf 2.0$\pm$0.0 \\
     & IVON-ADMM@$\vm_g$  & {\bf 40.0}$\pm$1.0 & {\bf 46.2}$\pm$0.4 & {46.5}$\pm$0.6  & {\bf 2.3}$\pm$0.0 & {\bf 2.1}$\pm$0.0 & {2.3}$\pm$0.1 \\
     & IVON-ADMM  & {\bf 40.0}$\pm$1.0 & {\bf 46.2}$\pm$0.4 & { 46.6}$\pm$0.6  & {\bf 2.3}$\pm$0.0 & {\bf 2.1}$\pm$0.0 & {2.2}$\pm$0.1 \\
    \bottomrule
    \end{tabular}}
    \label{tab:ivon_pvi}
    \end{table*}
}

\revision{
We see that using the Bayesian-ADMM update gives an advantage over PVI in the early rounds, while reaching comparable accuracies at the end. A benefit of PVI is that there are no step-sizes to tune other than the dual damping. Finally, we note that PVI has not been implemented so far with state-of-the-art methods like IVON, and IVON-PVI is a special case of our IVON-ADMM.
}

\revision{ 

\subsection{Ablation: Sensitivity to Inverse Client Step Size $\rho$ and Temperature $\tau$}
\label{app:abl_hyperparam}
We further study the sensitivity of IVON-ADMM to $\rho$ and $\tau$. We again use the ResNet-20 on CIFAR-100 and highlight the parameters used in the main paper. 

The results for varying temperature $\tau$ are summarized in \Cref{tab:tau_abl}.
\begin{table*}[h!]
    \centering
    \caption{Temperature ablation for IVON-ADMM. Test accuracy and NLL after 25, 50 and 100 rounds, with mean and standard deviations over 3 runs. } 
    \resizebox{\linewidth}{!}{
    \begin{tabular}{llcccccc}
    \toprule
     & & \multicolumn{3}{c}{\textbf{Test accuracy ($\uparrow$ larger is better)}} & \multicolumn{3}{c}{\textbf{Test NLL ($\downarrow$ smaller is better)}} \\
    \textbf{Scenario} & $\tau$ & 25 rounds & 50 rounds & 100 rounds & 25 rounds & 50 rounds & 100 rounds\\
    \midrule
    \multirow{6}{*}{\parbox{2.9cm}{\vspace{1.2cm} ResNet-20,  \\ 10 clients, \\ heterog. CIFAR-100}}  
     & $0.02$ & 38.6$\pm$0.8 & 23.9$\pm$9.4 & 25.5$\pm$7.6 & 2.5$\pm$0.1 & 5.3$\pm$1.4 & 4.1$\pm$0.1 \\
     & $0.05$ & \bf 42.0$\pm$0.6 & 42.0$\pm$0.6 & 41.1$\pm$0.6 & \bf 2.2$\pm$0.0 & 2.7$\pm$0.1 & 2.9$\pm$0.1 \\
     \rowcolor[gray]{0.96} & $0.1$ & 39.9$\pm$1.2 & \bf 46.1$\pm$0.6 & 46.6$\pm$0.8 & 2.3$\pm$0.0 & \bf 2.1$\pm$0.0 & 2.2$\pm$0.1 \\
     & $0.2$ & 35.7$\pm$1.0 & 45.1$\pm$1.0 & \bf 47.9$\pm$0.7 & 2.5$\pm$0.1 & \bf 2.1$\pm$0.0 & \bf 2.0$\pm$0.0 \\
     & $0.5$ & 27.1$\pm$1.1 & 36.6$\pm$1.6 & 40.2$\pm$1.6 & 2.9$\pm$0.1 & 2.5$\pm$0.1 & 2.3$\pm$0.1 \\
     & $1$ & 20.0$\pm$1.2 & 27.5$\pm$1.4 & 30.4$\pm$1.2 & 3.3$\pm$0.1 & 2.9$\pm$0.1 & 2.8$\pm$0.1 \\
    \bottomrule
    \end{tabular}}
\label{tab:tau_abl}
\end{table*}

}
\revision{
Smaller temperatures lead to faster convergence, but the overall best final accuracy is reached for $\tau = 0.2$. Larger temperatures tend to degrade the performance. All experiments in the paper use $\tau = 0.1$ (highlighted as a gray row in the table), but as we can see, an additional tuning of $\tau$ can further improve the results.
}

\revision{
The results for varying client inverse step-size $\rho$ are summarized in the following \Cref{tab:rho_abl}. 
\begin{table*}[h!]
    \centering
    \caption{Client inverse step-size $\rho$ ablation for IVON-ADMM. Test accuracy and NLL after 25, 50 and 100 rounds, with mean and standard deviations over 3 runs. } 
    \resizebox{\linewidth}{!}{
    \begin{tabular}{llcccccc}
    \toprule
     & & \multicolumn{3}{c}{\textbf{Test accuracy ($\uparrow$ larger is better)}} & \multicolumn{3}{c}{\textbf{Test NLL ($\downarrow$ smaller is better)}} \\
    \textbf{Scenario} & $\rho$ & 25 rounds & 50 rounds & 100 rounds & 25 rounds & 50 rounds & 100 rounds\\
    \midrule
    \multirow{6}{*}{\parbox{2.9cm}{\vspace{1.2cm} ResNet-20,  \\ 10 clients, \\ heterog. CIFAR-100}}  
        & $0.1$ & 1.0$\pm$0.0 & 1.0$\pm$0.0 & 1.0$\pm$0.0 & \text{diverged} & \text{diverged} & \text{diverged} \\
        & $0.2$ & 26.7$\pm$10.6 & 14.3$\pm$10.8 & 14.0$\pm$10.6 & \text{diverged} & \text{diverged} & \text{diverged} \\
        \rowcolor[gray]{0.96} & $0.5$ & \bf 39.8$\pm$1.2 & \bf 46.2$\pm$0.8 & \bf 46.6$\pm$0.9 & \bf 2.3$\pm$0.0 & \bf 2.1$\pm$0.0 & 2.2$\pm$0.1 \\
        & $1.0$ & 31.4$\pm$0.6 & 40.6$\pm$0.7 & 45.2$\pm$0.6 & 2.7$\pm$0.0 & 2.3$\pm$0.0 & \bf 2.1$\pm$0.0 \\
        & $2.0$ & 25.4$\pm$0.7 & 32.6$\pm$0.7 & 38.9$\pm$0.7 & 3.0$\pm$0.0 & 2.7$\pm$0.0 & 2.4$\pm$0.0 \\
        & $5.0$ & 18.2$\pm$0.9 & 23.7$\pm$0.7 & 29.0$\pm$0.8 & 3.4$\pm$0.0 & 3.1$\pm$0.0 & 2.9$\pm$0.0 \\
    \bottomrule
    \end{tabular}}
\label{tab:rho_abl}
\end{table*}

Smaller $\rho$, which gives the client more freedom to deviate from the server's consensus solution, leads to instabilities. Larger $\rho$ constrains the client to be closer to the server's solution, and while leading to a stable convergence degrades the overall performance. The value of $\rho=0.5$ which we used in the main paper is at a sweet-spot, balancing this trade-off.
}

\end{document}